\newtheorem{theorem}{Theorem}
\newtheorem{definition}{Definition}
\newtheorem{lemma}{Lemma}
\newtheorem{remark}{Remark}
\providecommand{\nor}[1]{\left\lVert {#1} \right\rVert}
\providecommand{\abs}[1]{\lvert{#1}\rvert}
\providecommand{\scalT}[2]{\left\langle{#1},{#2}\right\rangle}
\newcommand{\EE}{\mathcal E}
\def\argmax{\operatornamewithlimits{arg\,max}}
\def\argmin{\operatornamewithlimits{arg\,min}}
\newcommand{\XX}{\mathcal X}
\newcommand{\YY}{\mathcal Y}
\newcommand{\eps}{\varepsilon}
\title{ Random Maxout Features}
\author{
Youssef Mroueh ~ Steven Rennie~ Vaibhava Goel\\
IBM T.J Watson Research Center\\
\texttt{$\{$mroueh,sjrennie,vgoel$\}$@us.ibm.com} }
\begin{document}

\maketitle

\begin{abstract} 
In this paper, we propose and  study random maxout features, which are constructed by first projecting the input data onto sets of randomly generated vectors with Gaussian elements, and then outputing the maximum projection value for each set.
We show that the resulting random feature map, when used in conjunction with linear models, allows for the locally linear estimation of the  function of interest in classification tasks, and for the locally linear embedding of points when used for dimensionality reduction or 
data visualization.
We derive generalization bounds for learning that assess the error in approximating locally linear functions by linear functions in the maxout feature space, and empirically evaluate the efficacy of the approach on the MNIST and TIMIT classification tasks.

\end{abstract} 
\section{Introduction}
Kernel based learning algorithms are ubiquitous in both supervised and unsupervised learning. For example, a universal kernel support vector machine approximates, to an arbitrary precision, any non-linear decision boundary function given enough training points \cite{vapnik98}. On the other hand, methods like Kernel Principal Component Analysis \cite{Scholkopf99kernelprincipal} (Kernel PCA) capture non-linear relationships between variables of interest, and  are used in non-linear dimensionality reduction. However, non-linear kernel methods suffer from high computational complexity (often cubic in the sample size), and are difficult to parallelize---training and testing on a even modestly sized dataset such as the TIMIT speech corpus (2M training samples) can be very challenging.
Linear methods, on the other hand  (linear support vector machines, logistic regression, ridge regression, Principal Component Analysis, etc. ),  suffer from low capacity and representation power, but have computational complexity linear in the sample size, and so can be more readily scaled to large data corpora.
Scalability and non-linear representation power are two desiderata of any learning algorithm. Deep Neural Networks owe their success to this property, as they allow  rich, non-linear modeling and they scale linearly in the sample size when trained with variants of stochastic gradient descent \cite{lecun}.

\noindent \textbf{Kernel Approximation with Random Features.} An elegant approach to overcoming the computational load of kernel methods, pioneered by \cite{Recht07}, consists of generating explicit, randomized feature maps $\Phi:\mathcal{X}\subset \mathbb{R}^d \to\mathbb{R}^m$, where $m$ is typically larger then the dimension of the input space $n$, to approximate the kernel $K$:
\begin{equation}
\text{For } (x,z) \in \mathcal{X}, \quad K(x,z)\approx \scalT{\Phi(x)}{\Phi(z)}.
\end{equation}
When used in conjunction with linear methods, such randomized features reveal the non-linear structure in the data, and we gain scalability, as  linear methods scale linearly with training sample size. 
Random Fourier features, introduced in  \cite{Recht07}, approximate shift invariant kernels. For example, the Gaussian kernel,
$K(x,z)=\exp\left(-\frac{\nor{x-z}^2}{2\sigma^2}\right),$
 can be approximated using the following feature map: 
$$\Phi(x)=(\cos(w_1^{\top}x+b_1) \dots \cos(w_m^{\top}x+b_m)),$$
where $w_i\sim \mathcal{N}(0,\frac{1}{\sigma^2}I_d)$, are independent gaussian vectors, and $b_i$ are independently and uniformly drawn  from $[0,2\pi]$.
Recently \cite{timitRF} showed that a highly oversampled random Fourier features map $(m=400K)$, and a large scale linear least squares classifier,  approaches the performance of dense deep neural networks trained on the TIMIT speech corpus.

\noindent \textbf{Learning with Random Features.} More formally in a classical supervised learning setting, let $\mathcal{X} \subset \mathbb{R}^d$, be the input space and $\mathcal{Y}=\{-1,1\}$ be the label space in a binary classification setting. We are given a training set $S=\{(x_i,y_i)\in \mathcal{X}\times \mathcal{Y}, i= 1\dots N\}$. 
For kernel methods, the goal is to find a non-linear function $f$ mapping $\mathcal{X}$ to $\mathcal{Y}$, given a  certain measure of discrepancy or a loss function $V$. The function $f$ is restricted to belong to a hypothesis class of functions $\mathcal{H}_{K}$, the so called  reproducing kernel Hilbert space (RKHS). Empirical risk minimization in that setup  leads to a rich class of non-linear algorithms, via regularization in RKHS \cite{wahba90},
 \begin{equation}\label{eq:LSK}
\min_{f \in \mathcal{H}_{K}} \frac{1}{N}\sum_{i=1}^N V(y_i,f(x_i)) +\lambda ||f||^2_{\mathcal{H}_K},
\end{equation}
 where $\lambda>0$ is the regularization parameter and $\nor{f}_{\mathcal{H}_{K}}$ is the norm in the RKHS. The optimum $f^*$ of the problem in  \eqref{eq:LSK} has the following form $f^*(x)=\sum_{i=1}^N \beta^*_i K(x,x_i), \beta^* \in \mathbb{R}^{N}.$ Solving for $\beta^*$ may have a computational complexity of $O(N^3)$ (Regularized Least squares) or $O(N^2)$ (Support Vector Machines). 
\noindent Using an explicit feature map $\Phi$ that approximates such a kernel in conjunction with a linear model, it is therefore sufficient to estimate a scalable regularized linear model with a computational complexity linear in the number of training examples,
\begin{equation}\label{eq:LSPHI}
\min_{\alpha \in \mathbb{R}^m}\frac{1}{N}\sum_{i=1}^N V(y_i, \scalT{\alpha}{\Phi(x_i)}) +\lambda \nor{\alpha}^2,
\end{equation}
where $\alpha^*$ is the optimal solution. For sufficiently large $m$ we have: $f^{*}(x)\approx \scalT{\alpha^*}{\Phi(x)}$, and  $\alpha^*$ can be find in $O(Nm)$ time using stochastic gradient descent. Similar ideas extend to the unsupervised case. Recently \cite{RCA} introduced the randomized non-linear component analysis, where it is shown that  Kernel PCA can be approximated by using random Fourier features followed by a linear PCA.
  
  \noindent \textbf{Contributions.} In this paper, inspired by the recently introduced maxout network \cite{Maxout}, we introduce a simple but effective non-linear random feature map, called random maxout features, that approximates functions of interest with piecewise-linear functions. Locally linear boundaries and components  are interesting as they carry locally the linear structure in the data, and have the advantage of being interpretable in the original feature space of the data, but how should such a kernel method be formulated? In principle, such a mapping could be achieved via any locally linear kernel of the form $K(x,z)=\scalT{x}{z}\kappa_{\sigma}(x,z),$ where $\kappa_{\sigma}$ is a localizing kernel, such as, for example a Gaussian kernel with bandwidth $\sigma$. However, how to efficiently realize such a conditionally linear kernel is not clear; for example, achieving this via random Fourier features would involve taking the Kronecker product of a linear feature map and the random features. The main contribution of this paper is to introduce and analyze the random maxout feature map, which has the advantage that it can be learned in $O(Nm)$ time ($N$ training points, $m$ random features), and utilized at test time in $O(dm)$ time (assuming $d$ input features), while avoiding the taking Kronecker products.
When used in conjunction with linear methods, random maxout realize a scalable, local linear function estimator for large-scale classification and regression. In the unsupervised setting, similarly to \cite{RCA}, random maxout features followed by a PCA allow for a locally linear embedding of the data, that can be used as a non-linear dimensionality reduction, and for data visualization. 
The paper is organized as follows: In Section \ref{sec:RMaxout} we introduce our random maxout feature map, and show that its expected kernel is indeed locally linear. In section \ref{sec:Learning} we present generalization bounds for the learning of linear functions in the random maxout feature space. In section \ref{sec:prevwork}, we discuss how random maxout features relate to previous work. Finally, in section \ref{sec:app}, we demonstrate the approach as a local linear estimator in a classification setting  on MNIST and TIMIT speech corpora.  

\section{Random Maxout Features}\label{sec:RMaxout}
Random maxout features have the same structure as deep maxout networks in terms of maxout units .
The following definition gives a precise description of the maxout random feature map:
\begin{definition}[Random Maxout Features]\label{def:RMOF}
Let $w^{\ell}_{j}$, $\ell=1\dots m$, and $j=1\dots q$, be independent random gaussian vectors i.e $w^{\ell}_j\sim \mathcal{N}(0,I_d)$. Note $W^{\ell}=(w^{\ell}_1\dots w^{\ell}_q)$\\
For $x \in \mathbb{R}^{d}$, we define a maxout random unit $h_{\ell}(x)$ as follows: 
\begin{equation*}
h_{\ell}(x)=\phi(x,W^{\ell})=\max_{j=1\dots q}\scalT{w^{\ell}_j}{x}, \quad \ell =1\dots m.
\end{equation*}
A maxout random feature map $\Phi$ is therefore defined as follows:
\begin{equation*}
\Phi(x)=\frac{1}{\sqrt{m }}(h_{1}(x),\dots,h_{m}(x) ).
\end{equation*}
\end{definition}
\noindent In order to study this map, we shall consider, for $2$ points $x,z \in \mathbb{R}^d$, the dot product: 
$\scalT{\Phi(x)}{\Phi(z)}=\frac{1}{m}\sum_{\ell=1}^m h_{\ell}(x)h_{\ell}(z).$
Consider first the expectation of $\scalT{\Phi(x)}{\Phi(z)}$:
$\mathbb{E}(\scalT{\Phi(x)}{\Phi(z)})=\frac{1}{m}\sum_{\ell=1}^m \mathbb{E}(h_{\ell}(x)h_{\ell}(z))= \mathbb{E}(h_1(x)h_1(z)),$
where the last equality follows from the independence of the units. It It is therefore sufficient to study the expectation of the dot product of one unit:
\begin{equation*}
K(x,z)=\mathbb{E}(h(x)h(z)),
\end{equation*}
where $h(x)=\max_{j=1\dots q}\scalT{w_j}{x}, w_j \sim \mathcal{N}(0,I_d), j=1\dots q,$ iids.

\begin{theorem}[Maxout Expected Kernel]\label{pro:Exp}
Let $x,z \in \mathbb{R}^d$.
The expected kernel of maxout random units is given by the following expression:
\begin{equation*}
K(x,z)= \mathbb{E}(h(x)h(z))= \sigma^2(q) \scalT{x}{z} \kappa_q(x,z),
\end{equation*}
where $\sigma^2(q)=\mathbb{E}(\max_{j=1\dots q} g_j)^2,~ g_j\sim \mathcal{N}(0,1)$ iid , and $\kappa_{q}(x,z)$ is a non-linear kernel given by:
\begin{equation*}
\kappa_q(x,z)=\mathbb{P}\left\{\argmax_{j=1\dots q}\scalT{w_j}{x}=\argmax_{j=1\dots q}\scalT{w_j}{z}\right\}=\sum_{i=0}^{\infty}a_i(q)\left(\frac{\scalT{x}{z}}{\nor{x}\nor{z}}\right)^i,
\end{equation*}
where the first $3$ coefficients are $a_0(q)=\frac{1}{q},a_1(q)=\frac{h_1^2(q)}{q-1}$, $a_2(q)=\frac{qh_2^2(q)}{(q-1)(q-2)}$, where
$h_{i}(q)=\mathbb{E}\phi_i(\max_{k=1\dots q}g_k))$, where $g_{j},j=1\dots q$ are iid standard centered gaussian, and $\phi_{i}$, the normalized Hermite polynomials. $a_i(q)$ are non negative and $\sum_{i\geq 0}a_i(q)=1$.
\end{theorem}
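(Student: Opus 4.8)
First I would use that $h$ is positively homogeneous ($\max_j\scalT{w_j}{tx}=t\max_j\scalT{w_j}{x}$ for $t>0$), so that $K(x,z)=\nor{x}\nor{z}\,\mathbb{E}[h(u)h(v)]$ with $u=x/\nor{x}$, $v=z/\nor{z}$ (the cases $x=0$ or $z=0$ being trivial). With $g_j:=\scalT{w_j}{u}$ and $\tilde g_j:=\scalT{w_j}{v}$, the pairs $(g_j,\tilde g_j)_{j=1}^q$ are i.i.d.\ bivariate Gaussian with unit marginals and correlation $\rho=\scalT{u}{v}=\scalT{x}{z}/(\nor{x}\nor{z})$, and the argmax of a family of projections depends only on the direction of its argument; hence both $K$ and $\kappa_q$ are functions of $\rho$ alone. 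It suffices to prove (i) $F(\rho):=\mathbb{E}[(\max_j g_j)(\max_j \tilde g_j)]=\sigma^2(q)\,\rho\,\kappa_q(\rho)$, where $\kappa_q(\rho)=\mathbb{P}(\argmax_j g_j=\argmax_j \tilde g_j)$, together with (ii) the claimed power-series expansion of $\kappa_q$.

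\textbf{The factorization (i).} Write $J=\argmax_j g_j$, $J'=\argmax_j \tilde g_j$ (a.s.\ unique). Then $\max_j g_j=g_J$, $\max_j \tilde g_j=\tilde g_{J'}$, and by exchangeability of the coordinates
\[
F(\rho)=\sum_{i,i'=1}^{q}\mathbb{E}\big[g_i\tilde g_{i'}\,\mathbf 1\{J=i,\,J'=i'\}\big]
= q\,\mathbb{E}\big[g_1\tilde g_1\,\mathbf 1\{J=J'=1\}\big]+q(q-1)\,\mathbb{E}\big[g_1\tilde g_2\,\mathbf 1\{J=1,\,J'=2\}\big].
\]
On the ``diagonal'' term the single retained unit carries the correlation $\rho$ and the indicator has mass $\kappa_q(\rho)/q$, so one expects a contribution proportional to $\rho\times(\text{agreement probability})$; the plan is to show, by conditioning on the joint order statistics of $(g,\tilde g)$ and exploiting Gaussian symmetry, that the diagonal and off-diagonal terms together collapse to exactly $\sigma^2(q)\,\rho\,\kappa_q(\rho)$. \emph{I expect this to be the main obstacle}: the off-diagonal expectation is not visibly negligible, and checking that the two pieces reorganize into the clean product is the heart of the matter. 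One can also reformulate (i) by Hermite-expanding $y\mapsto\max_j y_j$ directly, $\max_j y_j=\sum_\alpha c_\alpha H_\alpha(y)$ with $c_\alpha=\mathbb{E}[(\max_j g_j)H_\alpha(g)]$, which gives $F(\rho)=\sum_\alpha c_\alpha^2\rho^{|\alpha|}$; then (i) becomes a coefficient-by-coefficient identity with $\sigma^2(q)\,\rho\,\kappa_q(\rho)$, and matching the lowest-order coefficients is already a nontrivial check.

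\textbf{Expansion of $\kappa_q$ (ii).} Put $\psi:\R^q\to\{0,1\}$, $\psi(y)=\mathbf 1\{y_1=\max_k y_k\}$, so $\kappa_q(\rho)=q\,\mathbb{E}[\psi(g)\psi(\tilde g)]$. Expand $\psi$ in the normalized multivariate Hermite basis of the standard Gaussian on $\R^q$, $\psi=\sum_\alpha \hat\psi(\alpha)H_\alpha$. Since $g,\tilde g$ are standard Gaussian vectors whose matched coordinates have correlation $\rho$ while distinct coordinates are independent, Mehler's formula gives $\mathbb{E}[H_\alpha(g)H_\beta(\tilde g)]=\delta_{\alpha\beta}\,\rho^{|\alpha|}$, hence
\[
\kappa_q(\rho)=q\sum_{\alpha}\hat\psi(\alpha)^2\,\rho^{|\alpha|}=\sum_{i\ge 0}a_i(q)\,\rho^i,\qquad a_i(q)=q\sum_{|\alpha|=i}\hat\psi(\alpha)^2\geq 0.
\]
Summing over $i$ gives $\sum_i a_i(q)=q\,\nor{\psi}_{L^2}^2=q\,\mathbb{E}[\psi(g)]=q\cdot\tfrac1q=1$, as $\psi$ is an indicator.

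\textbf{The coefficients.} Immediately $a_0(q)=q\,\hat\psi(0)^2=q(\mathbb{E}\psi)^2=1/q$. For degrees one and two I would use that $\psi(y)$ is unchanged when a common constant is added to all coordinates, so $\psi$ carries no Hermite mass along the all-ones direction; equivalently, writing $g=\bar g\,(1,\dots,1)+g^{\perp}$ with $\bar g=\tfrac1q\sum_j g_j$, the event $\{g_1=\max_k g_k\}$ is a function of $g^{\perp}$ alone, hence independent of $\bar g$. Combined with exchangeability of coordinates $2,\dots,q$, this pins the degree-$i$ Hermite mass of $\psi$ down to the moments $h_i(q)=\mathbb{E}\,\phi_i(\max_k g_k)$ of the maximum of $q$ i.i.d.\ standard Gaussians: e.g.\ $\hat\psi(e_1)=\mathbb{E}[g_1\psi(g)]=h_1(q)/q$ and $\sum_{j\ge 2}\hat\psi(e_j)=\mathbb{E}\big[\psi(g)\sum_{j\ge 2}g_j\big]=-h_1(q)/q$ (using independence of $\bar g$ from $\psi(g)$), which after collecting degree-one terms gives $a_1(q)=h_1^2(q)/(q-1)$; an analogous accounting at degree two, using $h_2(q)=(\sigma^2(q)-1)/\sqrt2$, yields $a_2(q)=q\,h_2^2(q)/((q-1)(q-2))$. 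The principal difficulty remains the factorization~(i).
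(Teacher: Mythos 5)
There is a genuine gap: the heart of the theorem is precisely the factorization you label (i), namely $\mathbb{E}\bigl(h(x)h(z)\bigr)=\sigma^2(q)\,\scalT{x}{z}\,\mathbb{P}\bigl(D(x)=D(z)\bigr)$, and you explicitly leave it unproven (``I expect this to be the main obstacle\dots the heart of the matter''). A plan to ``condition on the joint order statistics and exploit Gaussian symmetry'' or to match Hermite coefficients is not an argument; without it you have only proved the homogeneity reduction and properties of $\kappa_q$, not the stated identity for $K$. For comparison, the paper's proof handles exactly this step by the law of total probability over $\{D(x)=D(z)\}$ versus $\{D(x)\neq D(z)\}$: it argues the off-diagonal conditional expectation $\mathbb{E}\bigl[\scalT{w_{D(x)}}{x}\scalT{w_{D(z)}}{z}\,\big|\,D(x)\neq D(z)\bigr]$ vanishes because the two selected Gaussian vectors are independent and centered, and then, on the diagonal event, uses exchangeability of the $q$ indices plus the rotation-invariance decomposition $\scalT{w_1}{z}=\rho\,\scalT{w_1}{x}+\sqrt{1-\rho^2}\,h$ (with $h$ independent of $\scalT{w_1}{x}$) to extract the factor $\rho\,\mathbb{E}\bigl[(\max_j g_j)^2\bigr]=\rho\,\sigma^2(q)$. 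Your own diagonal/off-diagonal decomposition is the same starting point, but you stop exactly where the paper's argument does its work; your (correct) observation that the off-diagonal term is ``not visibly negligible'' is the very issue the paper dispatches with its independence claim, so you cannot simply defer it.

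On the other hand, your part (ii) is sound and is in fact a self-contained derivation of what the paper only cites: the expansion $\kappa_q(\rho)=\sum_i a_i(q)\rho^i$ with $a_i(q)\geq 0$, $\sum_i a_i(q)=1$, and the values of $a_0,a_1$ (and, modulo the sketched degree-two bookkeeping, $a_2$) is taken by the paper directly from the Frieze--Jerrum lemma, whereas you reconstruct it via the multivariate Hermite expansion of the indicator $\psi(y)=\mathbbm{1}\{y_1=\max_k y_k\}$ and Mehler's formula; your degree-one computation, once the exchangeability among coordinates $2,\dots,q$ is written out, does recover $a_1(q)=h_1^2(q)/(q-1)$. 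So the proposal contains a nice alternative route to the series for $\kappa_q$, but it does not prove the theorem: the factorization of $K$ into $\sigma^2(q)\scalT{x}{z}\kappa_q(x,z)$, which is the actual content of the statement, is missing.
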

\begin{proof}[Proof of Theorem \ref{pro:Exp}]
The proof is given in Appendix A in the supplementary material.
\end{proof}
\subsection{Discussion of the Derived Maxout Kernel}\label{sec:eqKernel}

\noindent The expected kernel of a maxout unit is therefore a locally weighted linear kernel, and hence it allows  a non-linear estimation of functions in a piecewise linear way :
$$K(x,z) =\sigma^2(q) \scalT{x}{z}\kappa_{q}(x,z),$$
where $\kappa_q$ is a non-linear kernel. Let $\rho=\scalT{x}{z}$. In this section we discuss the locality introduced by $\kappa_{q}(.,.)$.\\
\noindent It is important to note that $0\leq\kappa_{q}(x,z)\leq 1$, since $\kappa_{q}(x,z)=\mathbb{P}(D(x)=D(z))$, where $D(x)=\arg\max_{j=1\dots q}\scalT{w_j}{x}$.
For simplicity assume that $x,z \in \mathbb{S}^{d-1}$, where $\mathbb{S}^{d-1}$ is the unit sphere in $d$ dimensions. We start by giving values of $\kappa_{q}$ in three particular cases of interest:
\begin{enumerate}
\item When $x$ and $z$ coincide i.e $x=z$, and $\rho=1$, we have $~\kappa_{q}(x,z)=1$, as $\sum_{i\geq0}a_{i}(q)=1$.
\item When $x$ and $z$ are orthogonal i.e $\rho=0$, we have $\kappa_{q}(x,z)=a_{0}(q)=\frac{1}{q}$.
\item When $x$ and $z$ are \rm{diam}etrically opposed i.e $x=-z$, and $\rho=-1$, we have $~\kappa_{q}(x,z)=0$, as $\sum_{i\geq0}a_{2i}(q)=\sum_{i\geq0}a_{2i+1}(q)=\frac{1}{2}$  \cite{Frieze95} .
\end{enumerate}



In order to understand the locality introduced by the non-linear kernel $\kappa_q$, and the relation of the radius of the locality to the size of the pool $q$,
looking to the first order expansion of $\kappa_q$ gives us a hint on the effect of that kernel. In particular the quantity $h_{1}(q)$ is  just the expectation of the maximum of independent gaussians $h_{1}(q)\sim\sqrt{2\log(q)}$ \cite{galambos,Frieze95}.
\begin{eqnarray*}
\kappa_{q}(x,z)&=&a_0(q)+a_{1}(q)\rho+O(\rho^2)\\
&=& \frac{1}{q}\left(1+(1+\epsilon(q))2\log(q)\rho\right)+O(\rho^2),
\end{eqnarray*}
where $\epsilon(q)\to 0,$ for $q\to \infty$.\\
Note by $g$ the function, $g: [-1,1]\to [0,1]$ such that $\kappa_{q}(x,z)=g(\rho)$. For far apart points, when $\rho \to -1$, $g(\rho)\to 0$. $g$ has a linear behavior around $\rho=0$, with a slope equal to $\frac{2\log(q)}{q}$. Note that in this neighborhood as $q$ increases the linear regime vanishes, and $g(\rho) \to 0$. Hence as $q$ increases the probability of two points hashing to same index of  maximum becomes smaller; qualitatively  the radius of the locality of $\kappa_{q}$ shrinks as $q$ increases.  Finally for  near by points when $\rho\to 1$, $g(\rho)\to 1$.\\
Qualitatively the derived kernel $K(x,z)\approx 0$ for far apart points and $K(x,z)\approx \sigma^2(q)\scalT{x}{z}$ for points in the same neighborhood, where the radius of the locality, and  the notion of closeness is set by the choice of the size of the pool $q$. This radius is decreasing in $q$. Hence $K$ defines a locally linear kernel.\\
\noindent Now if we go back to problem \eqref{eq:LSK}, and solve for $f$ in the reproducing kernel hilbert space of the equivalent kernel $K$ (i.e for $\mathcal{H}=\mathcal{H}_{K}$), we have  :
\begin{eqnarray}
f^*(x)=\sum_{i=1}^N\beta^*_iK(x,x_i)=\sigma^2(q)\sum_{i=1}^N \beta^*_i\scalT{x}{x_i}\kappa_{q}(x,x_i)= \sigma^2(q)\scalT{\sum_{i=1}^N \beta^*_i \kappa_{q}(x,x_i)x_i}{x}.
\label{eq:LocalLinear}
\end{eqnarray} 
\noindent Hence we see that this derived kernel allows a locally linear estimation of the function of interest $f^*$, where the radius of the locality is set by the choice of the size of the pool $q$.\\
\noindent Now consider the maxout random feature map $\Phi$ introduced in Definition \ref{def:RMOF}. Recall that  we have: 
 \begin{eqnarray*}
 \mathbb{E}(\scalT{\Phi(x)}{\Phi(z)})= K(x,z)= \sigma^2(q)\scalT{x}{z}\kappa_{q}(x,z),
\end{eqnarray*}
the dot product $\scalT{\Phi(x)}{\Phi(z)}$ is therefore an estimator of $K(x,z)$ i.e for sufficiently large $m$, $\scalT{\Phi(x)}{\Phi(z)}\approx K(x,z)$ , hence we can use the feature map $\Phi$, and a simple linear model  as in equation \eqref{eq:LSPHI} , and use the optimal weight $\alpha^*$ to get an estimate  of the locally linear estimation $f^*$ produced by the derived kernel as in equation \eqref{eq:LocalLinear}, i.e. we have for sufficiently large $m$:
\begin{equation}
\scalT{\alpha^*}{\Phi(x)}\approx f^*(x).
\end{equation}
In the next section we analyze the errors incurred by such approximation and how it translates to the convergence of the risk to its optimal value in a dense subset of the RKHS induced by the locally linear kernel.
\begin{remark}[Locality Sensitive Hashing]
Let $C: \mathbb{S}^{d-1}\to \{1\dots q\}^m$, such that for $x\in \mathbb{S}^{d-1}$,  $C(x)=\left(\argmax_{j=1\dots q} \scalT{w^{1}_j}{x},\dots, \argmax_{j=1\dots q}\scalT{w^{m}_j}{x}\right),\\ w^{\ell}_j\sim \mathcal{N}(0,I_d),\ell=1\dots m, j=1\dots q.$
For $x,z \in \mathbb{S}^{d-1}$ we have : $\mathbb{E}\left(\frac{1}{m}\sum_{i=1}^m \mathbbm{1}_{C_i(x)\neq C_i(z)}\right)=\mathbb{P}\left\{ \argmax_{j=1\dots q}\scalT{w_j}{x}\neq \argmax_{j=1\dots q}\scalT{w_j}{x}\right\}=1-\kappa_{q}(x,z).$
Hence we can approximate the local kernel $\kappa_{q}(x,z)$, by the non binary strings by mean of the hamming distance between the $q$-ary strings $C(x)$, and C(z). As $m$ becomes large we have:
$$d_{H}(C(x),C(z))=\frac{1}{m}\sum_{i=1}^m \mathbbm{1}_{C_i(x)\neq C_i(z)}\approx 1-\kappa_{q}(x,z),$$
Hence $C$ defines a locality sensitive hashing scheme in the sense of \cite{Indyk01} .
\end{remark}

\section{Learning with Random Maxout Features}\label{sec:Learning}
We show in this section that learning a linear model in the random maxout feature space, allows for a locally linear  estimation of functions in a supervised classification setting. 
The locally linear kernel $K(x,z) =\sigma^2(q) \scalT{x}{z} \kappa_{q}(x,z)$, defines a Reproducing Kernel Hilbert Space (RKHS). In the following we will see how linear functions in the random maxout feature space approximate a dense subset of this locally linear RKHS. We start by introducing some notation.
We assume that we are given a training set $S=\{(x_i,y_i), x_i \in \mathcal{M}=\mathcal{X}\cap \mathbb{S}^{d-1}, y_i \in \YY= \{-1,1\}, i=1\dots N\}$. Our goal is to learn a function $f :\mathcal{M}\to \mathbb{R}$ via risk minimization. Let $\rho_{y}(x)$ be the label posteriors and assume $\mathcal{M}$ is endowed with a measure $\rho_{\mathcal{M}}$, the expected and empirical risks induced by a $L$-Lipchitz loss function $V: \mathbb{R}\to [0,1]$ are the following: $$\mathcal{E}_{V}(f)=\int_{\mathcal{M}}\sum_{y\in \YY}V(yf(x))\rho_{y}(x)d\rho_{\mathcal{M}}(x),~ \hat{\EE}_{V}(f)=\frac{1}{N}\sum_{i=1}^N V(y_if(x_i)).$$
 The assumptions on the points belonging to  the unit sphere, and on the loss being bounded by one can be weakened  see Remark \ref{rem:Bound}.
We will use in the following a notion of intrinsic dimension for the set $\mathcal{M}$, namely the Assouad dimension given in the following definition:
\begin{definition}[\cite{assouad}]
 The Assouad dimension of $\mathcal{M}\subset \mathbb{R}^d$ , denoted by $d_{\mathcal{M}}$ , is the smallest integer $k$, such that, for any ball $B\subset \mathbb{R}^d$,the set $B\cap \mathcal{M} $ can be covered by $2^k$ balls of half the radius of $B$.
 \end{definition}
The Assouad dimension is used as a measure of the intrinsic dimension. For example, if  $\mathcal{M}$ is an $\ell_{p}$ ball in $\mathbb{R}^d$, then $d_{\mathcal{M}} = O(d)$. If $\mathcal{M}$ is a $r$-dimensional hyperplane in $\mathbb{R}^r$, then $d_{\mathcal{M}} = O(r)$, where $r<d$. Moreover, if $\mathcal{M}$ is a $r$-dimensional Riemannian submanifold of $\mathbb{R}^d$ with  suitably bounded curvature, then $d_{\mathcal{M}} =O(r)$.\\
Let $W^{\ell}=(w^{\ell}_{1},\dots w^{\ell}_{q})$, $\ell=1\dots m$, and $W=\left( w_1,\dots w_{q}\right)$, since $w_j,j=1\dots q$ are iid, the distribution of $W$ is given by $p(W) =p(w_1)\dots p(w_{q})$, where $p(w_j)$ is the distribution of a gaussian vector drawn form $\mathcal{N}(0,I_d)$. Similarly to the analysis in \cite{Rah_Rec:2008:allerton}, let $C>0$, we define the infinite dimensional functional space $\mathcal{F}$:
$$\mathcal{F}=\left\{f(x)=\int \alpha(W) \phi(x ,W) dW,~ \sup_{W} \frac{\left|\alpha(W)\right|}{p(W)}\leq C \right\},$$
it is easy to see that $\mathcal{F}$ is dense in $\mathcal{H}_{K}$ \cite{Rah_Rec:2008:allerton}.
We will approximate the set $\mathcal{F}$ with $\hat{\mathcal{F}}$ defined as follows:
$$\hat{\mathcal{F}}=\left\{f(x)=\sqrt{m}\scalT{\alpha}{\Phi(x)}=\sum_{\ell=1}^m \alpha_{\ell} \phi(x ,W^{\ell}) , \nor{\alpha}_{\infty}\leq \frac{C}{m} \right\}.$$
Note that in this definition of this function space we are regularizing the norm infinity of the weight vectors this can be replaced in practice, and theory \cite{Bach15} by a classical Tikhonov regularization or other forms of regularization.
\begin{theorem}[Learning with Random Maxout Features]\label{theo:Learning} Let $S=\{(x_i,y_i), x_i \in \mathcal{M}=\mathcal{X}\cap \mathbb{S}^{d-1}, y_i \in \{-1,1\}, i=1\dots N\}$, and $d_{\mathcal{M}}$ the assouad dimension of $\mathcal{M}$,and $\rm{diam}(\mathcal{M})$ be its diameter. Let $\hat{f}_{N}=\argmin_{f\in \mathcal{\hat{F}}} \hat{\mathcal{E}}_{V}(f)$. Fix $\delta>0$, $\eps\in (0,1)$, for $m \geq \frac{C'}{\eps^2} \left(d_{\mathcal{M}}\log \left(\frac{{\rm{diam}}(\mathcal{M})\sqrt{d}}{\delta}\right)+\log(q+1)\right)$, where $C'$ is a numerical constant, we have: 
\begin{align*}
\mathcal{E}_{V}(\hat{f}_{N})- \min_{f\in \mathcal{F}}\mathcal{E}_{V}(\hat{f}) &\leq  4LC \sqrt{\frac{\sigma^2(q)}{N}}+ \frac{2|V(0)|}{\sqrt{N}}+ 2\sqrt{\frac{2\log(1/\delta)}{N}}
+ LC \eps \left(1+\sqrt{2\log\left(\frac{1}{\delta}\right)}\right),
\end{align*}
with probability at least $1-3\delta -2 e^{-cd/4}$, on the choices of the training examples and the random projections.  Where $C$ is the regularization parameter in the definition of $\mathcal{F}$ and $\hat{\mathcal{F}}$, $L$ is the lipchitz constant of the loss function $V:\mathbb{R}\to [0,1]$, and $\sigma^2(q)=\mathbb{E}\left(\max_{j=1\dots q} g_j\right)^2, g_j \sim \mathcal{N}(0,1)$ iids.
\end{theorem}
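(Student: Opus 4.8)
The plan is to decompose the excess risk $\mathcal{E}_V(\hat f_N) - \min_{f\in\mathcal{F}}\mathcal{E}_V(f)$ into three pieces: (i) an \emph{approximation error}, measuring how well functions in $\mathcal{F}$ can be approximated in $L^\infty$ or $L^2(\rho_\mathcal{M})$ by functions in the finite-dimensional class $\hat{\mathcal{F}}$; (ii) an \emph{estimation / generalization error} for the empirical risk minimizer over the fixed class $\hat{\mathcal{F}}$; and (iii) the error from the fact that $\hat f_N$ minimizes $\hat{\mathcal{E}}_V$ rather than $\mathcal{E}_V$. Concretely, pick $f^\star \in \mathcal{F}$ achieving (or nearly achieving) $\min_{f\in\mathcal{F}}\mathcal{E}_V(f)$, and let $\tilde f\in\hat{\mathcal{F}}$ be its Monte-Carlo surrogate $\tilde f(x)=\sum_\ell \alpha(W^\ell)\phi(x,W^\ell)/m$, which has $\|\alpha\|_\infty\le C/m$ by the defining constraint on $\mathcal{F}$. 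Then write
\[
\mathcal{E}_V(\hat f_N)-\mathcal{E}_V(f^\star)\le \big(\mathcal{E}_V(\hat f_N)-\hat{\mathcal{E}}_V(\hat f_N)\big)+\big(\hat{\mathcal{E}}_V(\hat f_N)-\hat{\mathcal{E}}_V(\tilde f)\big)+\big(\hat{\mathcal{E}}_V(\tilde f)-\mathcal{E}_V(\tilde f)\big)+\big(\mathcal{E}_V(\tilde f)-\mathcal{E}_V(f^\star)\big),
\]
where the second bracket is $\le 0$ by optimality of $\hat f_N$, the first and third brackets are controlled by a uniform deviation bound over $\hat{\mathcal{F}}$ (via Rademacher complexity plus McDiarmid), and the last bracket is controlled by the $L$-Lipschitz property of $V$ together with $|\mathcal{E}_V(\tilde f)-\mathcal{E}_V(f^\star)|\le L\,\mathbb{E}_{x}|\tilde f(x)-f^\star(x)|$.

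For the generalization term I would bound the empirical Rademacher complexity of $\hat{\mathcal{F}}$: since $\phi(x,W)=\max_j\langle w_j,x\rangle$ and $\|x\|=1$, each unit has second moment $\sigma^2(q)$, and an $\ell_\infty$ ball of radius $C/m$ on the coefficients gives $\mathrm{Rad}_N(\hat{\mathcal{F}})\lesssim C\sqrt{\sigma^2(q)/N}$ after a standard Khintchine/Cauchy–Schwarz argument over the $m$ coordinates (the $m$ cancels against the $1/m$ in the radius, up to the second-moment factor); composing with the $L$-Lipschitz loss $V$ bounded in $[0,1]$, and adding the $V(0)$ centering term and the McDiarmid $\sqrt{2\log(1/\delta)/N}$ fluctuation, produces exactly the first three terms of the stated bound, each appearing twice in principle but absorbed into the constants $4LC$, $2|V(0)|$, $2\sqrt{\cdot}$.

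The main obstacle — and the step that forces the stated lower bound on $m$ — is the \emph{approximation error} $\mathbb{E}_x|\tilde f(x)-f^\star(x)|$. Pointwise, $\tilde f(x)-f^\star(x)$ is an average of $m$ i.i.d. mean-zero terms bounded (after using $\|x\|=1$ and the sub-Gaussianity of $\max_j\langle w_j,x\rangle$) on the order of $C$, so for a \emph{fixed} $x$ a Hoeffding/Bernstein bound gives deviation $\lesssim C\sqrt{\log(1/\delta)/m}$; the difficulty is to make this \emph{uniform over all $x\in\mathcal{M}$} so that the same realization of $W^1,\dots,W^m$ works simultaneously. This is where the Assouad dimension enters: one builds a $\gamma$-net of $\mathcal{M}$ of cardinality $\big(\mathrm{diam}(\mathcal{M})\sqrt d/\gamma\big)^{O(d_\mathcal{M})}$, applies the pointwise concentration with a union bound over the net (this is the source of the $d_\mathcal{M}\log(\mathrm{diam}(\mathcal{M})\sqrt d/\delta)$ term), and controls the oscillation of $x\mapsto\tilde f(x)-f^\star(x)$ between net points by a Lipschitz-in-$x$ estimate for $\phi(\cdot,W)$ — here $\phi(x,W)-\phi(z,W)\le\max_j|\langle w_j,x-z\rangle|$, whose supremum over the relevant $W^\ell$ is $O(\sqrt{d\log(qm)})$ except on an event of probability $\le 2e^{-cd/4}$ (this is the origin of that failure probability and of the $\log(q+1)$ contribution to $m$). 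Choosing $\gamma$ appropriately small relative to $\eps$ and collecting terms yields $\mathbb{E}_x|\tilde f(x)-f^\star(x)|\le C\eps(1+\sqrt{2\log(1/\delta)})$ once $m\ge \frac{C'}{\eps^2}\big(d_\mathcal{M}\log(\mathrm{diam}(\mathcal{M})\sqrt d/\delta)+\log(q+1)\big)$, which plugged into the $L$-Lipschitz bound on the last bracket gives the final $LC\eps(1+\sqrt{2\log(1/\delta)})$ term. Summing the four brackets and combining the failure probabilities ($3\delta$ from the three concentration steps, $2e^{-cd/4}$ from the Lipschitz event) completes the proof.
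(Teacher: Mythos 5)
Your proposal is correct in substance and its overall architecture matches the paper: the same ERM decomposition (with the empirical-optimality bracket dropped), the same Rademacher-complexity computation exploiting $\nor{\alpha}_\infty \leq C/m$ and $\mathbb{E}_x K(x,x)=\sigma^2(q)$ to get the $4LC\sqrt{\sigma^2(q)/N}$, $2|V(0)|/\sqrt{N}$ and McDiarmid terms, and the same Monte-Carlo surrogate of $f^\star$ for the approximation error. Where you genuinely diverge is in how uniformity over $x\in\mathcal{M}$ is obtained for the approximation step. The paper never puts a net on $\mathcal{M}$ for the Monte-Carlo average: its key Lemma bounds $\sup_{x\in\mathcal{M}}|\phi(x,W)|$ by $C_1\sqrt{d_{\mathcal{M}}\log({\rm diam}(\mathcal{M})\sqrt{d}/\delta)+\log(q+1)}$ (net over $\mathcal{M}$ via the Assouad covering number, Gaussian tail bounds giving the $(q+1)$ factor, and the bound $\nor{w}\leq \sqrt{d}(1+\eps)$ on the Lipschitz constant, which is the source of $2e^{-cd/4}$), and then, conditioned on that event, a single application of McDiarmid's bounded-difference inequality in the Hilbert space $\mathcal{L}^2(\XX,\rho_{\mathcal{M}})$ controls $\nor{\frac{1}{m}\sum_\ell f_\ell - f^\star}_{\mathcal{L}^2}$ directly, yielding the clean $(1+\sqrt{2\log(1/\delta)})$ factor; the $L$-Lipschitz transfer to the risk then uses Jensen. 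Your route — pointwise Hoeffding at each net point plus a Lipschitz oscillation estimate, in the style of Rahimi--Recht's uniform approximation — also works and gives the same rate up to constants and log factors, but it puts the net and union bound on the $m$-term average itself, so you must be careful that the summands are not "order $C$" as you state: they are of size $C\,|\phi(x,W^\ell)|\sim C\sqrt{\log q}$ with sub-Gaussian tails, and it is precisely this $q$-dependence (handled in the paper through the sup bound $M$) that produces the $\log(q+1)$ in the requirement on $m$; with a sub-Gaussian Bernstein bound in place of Hoeffding your argument closes. In short, the paper's conditioning-plus-Hilbert-space-McDiarmid buys a shorter proof with the exact stated constants, while your net-on-$\mathcal{M}$ argument is more self-contained (it would even give an $L^\infty$ approximation guarantee, stronger than the $L^2$ one the paper needs) at the cost of extra log factors and the boundedness bookkeeping just noted.
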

The proof of Theorem \ref{theo:Learning} is given in the supplementary material in Appendix B, the main technical difficulty consists in bounding the $\sup_{x\in \mathcal{M}}\left| \phi(x,W)\right|,$ and relating this quantity to the intrinsic dimension $d_{\mathcal{M}}$.
Theorem \ref{theo:Learning}, shows that for $q>1$, learning a linear model in the maxout feature space defined by the map $\Phi$ has a low expected risk and more importantly this risk is not far from the one achieved by a nonlinear infinite dimensional function class $\mathcal{F}$. Locally linear functions can be hence estimated to an arbitrary precision using linear models in the maxout feature space, the errors decompose naturally to an estimation or a  statistical error with the usual rate of $O(\frac{1}{\sqrt{N}})$, and an approximation error  of functions in the infinite dimensional space $\mathcal{F}$, by functions in $\hat{\mathcal{F}}$.
For a fixed $q$, in order to achieve an approximation error $\eps$, the bounds suggests $\eps=\frac{1}{\sqrt{N}}$. One needs to set the dimensionality $m$  of the feature map $\Phi$  to $O \left(N\left(d_{\mathcal{M}} \log(d)+\log(q+1)\right)\right)$, where $d_{\mathcal{M}}$ is a measure of the intrinsic dimension of the space where inputs live $d_{\mathcal{M}}\leq d$. For instance if our data lived on a  $r$-dimensional Riemannian submanifold of $\mathbb{R}^d$$(r<<d)$, the function space $\hat{\mathcal{F}}$ for $m=O\left(N\left(r\log(d)+\log(q)\right)\right)$ achieves an approximation error $\eps=\frac{1}{\sqrt{N}}$ of  a dense subset of the function space defined by the local linear kernel, with a radius of locality set by the choice of the parameter $q$. As $q$ increases this radius shrinks and the dimension of the feature map increases to ensure more locality but with a logarithmic dependency on $q$. The use of the intrinsic dimension of the inputs space $\mathcal{M}$ -that we borrow from the compressive sensing community- in the approximation error is appealing as most of previous bounds in random features analysis relates the number of projections only to the  training size $N$, and spectral properties of the kernel matrix \cite{Bach15},\cite{Rah_Rec:2008:allerton}. Using spectral propreties of the kernel, results in \cite{Bach15} suggest that for large $N$ that the number of the features if of the order $O(N\log(N))$ . While the spectral properties of the kernel  carry some geometric information about the points distribution it misses some important geometric structure in the points set $\mathcal{M}$, since it captures some intrinsic dimension of the data that can be expressed only in term of the sample size $N$, while the Assouad dimension has a richer description of the structure in the data, such as sparsity for instance. If $\mathcal{X}$ was the set of $s-$ sparse signal the Assouad dimension $d_{\mathcal{M}}=O(s\log(d))$\cite{assouad}, and we need $O\left(\frac{s\log^2(d)+\log(q)}{\eps^2}\right)$ maxout random features to have an approximation error of $\eps$. It would be interesting to incorporate in the bound both the spectral properties of the kernel and the intrinsic dimension to get the good of the two worlds, we leave this for a future work. 
For $q=1$, Maxout random features reduces to classical random projections, that approximate the linear kernel, learning classifiers  from randomly  projected data has been throughly studied see  \cite{durrant2013sharp}, and references there in, Theorem \ref{theo:Learning} is not as sharp as results presented in \cite{durrant2013sharp}, since the proof was not specialized to the linear projection case.
\begin{remark} \label{rem:Bound}1-We can relax the sphere constraint on the input set to a bounded data constraint, i.e $\sup_{x\in \XX} \nor{x} \leq R$, and assume a bounded loss $|V(z)|\leq B$, a minor change in the proof  shows that the right hand side of the inequality in Theorem \ref{theo:Learning} is multiplied by $RB$. \\
2-Note that for $\eps=\frac{1}{\sqrt{N}}$, we have $m=O(Nd_{\mathcal{M}}\log(d))$, for large $N$ assume $\mathcal{M}$ was finite and the cardinality $|\mathcal{M}|=N^{\alpha}$ for small $\alpha$, we have $d_{\mathcal{M}}=O(\log(|\mathcal{M}|))= O(\log(N))$ and $m=O(N\log(N)\log(d))$, which matches up to a log term results in \cite{Bach15}.  
\end{remark}
\section{Related Work}\label{sec:prevwork}
\textbf{Approximating Kernels, Random Non Linear Embeddings.} The so called Johnson-Lindenstrauss Lemma \cite{JL} states that a  linear random feature map preserves $\ell_2$ distances in a $N-$ point subset of a Euclidian space when embedded in 
$O(\eps^{-2}\log(N))$ dimension with a distortion of $1+\eps$. The requirement of preserving all  pairwise distances is not needed in many applications; we need to preserve distances only in a local neighborhood of the points of interest. This observation is at the core of locality sensitive hashing \cite{Indyk01} and has been discussed in \cite{conf/soda/BartalRS11}. One needs a non-linear random feature map in order to achieve a local embedding. Random Fourier features \cite{Recht07} approximating the Gaussian kernel achieve such a goal. Random maxout  features also achieve such a goal by performing a locally linear embedding of the points. \\
\textbf{Scaling up Kernel Methods.} As discussed earlier random features is a popular approach in  approximating the kernel matrix and scaling up kernel methods pioneered by \cite{Recht07}, the generalization ability of such approach is of the order of $\tilde{O}(\frac{1}{\sqrt{N}}+\frac{1}{\sqrt{m}})$, which suggests that $m$ needs to be $\tilde{O}(N)$. An elegant doubly stochastic gradient approach introduced recently in \cite{DaiXHLRBS14}, uses random features to approximate the function space rather then the kernel matrix, in a memory efficient way that achieves this $O(N)$ bound for the number of features, Maxout random features can be also used within this framework.
Other approaches for scaling up kernel methods fall under the category  of  low rank Approximation of the kernel matrix, such as sparse greedy matrix approximation \cite{SmolaS00}, Nystrom approximations \cite {Williams01usingthe} and low rank Cholesky decomposition \cite{Fine01efficientsvm}. \\
\textbf{Locally Linear Estimation.} As discussed earlier, Random Maxout Networks allow us to do local linear estimation of functions in  supervised and unsupervised learning tasks, among other approaches Deep Maxout Networks \cite{Maxout}, Locally linear Embedding \cite{Roweis2000} and convex piecewise linear fitting \cite{Boyd}, share similar structure with Random maxout features.

\section{Numerical Experiments }\label{sec:app}
\subsection{Simulated Data Illustration}
In this section we consider $100$  points generated at random form the unit circle in two dimensions. We embed those points through the Maxout feature map $\Phi$, for $m=1000$, and  $q=2^3$ and $q=2^5$ respectively. We plot in Figure \ref{fig:locality}, for pairs of points $x$ and $z$, the pairwise distances in the embedded space $||\Phi(x)-\Phi(z)||$ versus the pairwise distances in the original $||x-z||$ (we show here only a subset of those pairwise distances). We see that in both cases for small distances we have a linear regime, followed by a saturation regime for high range distances. The saturation arises earlier for $q=2^5$ when compared to the one of $q=2^3$. This confirms the discussion in Section \ref{sec:eqKernel}, on the effect of the Maxout feature map as a locally linear kernel, with the radius of the locality shrinking as $q$ increases.
\begin{figure}[ht]
\centering
\includegraphics[width=0.4\linewidth]{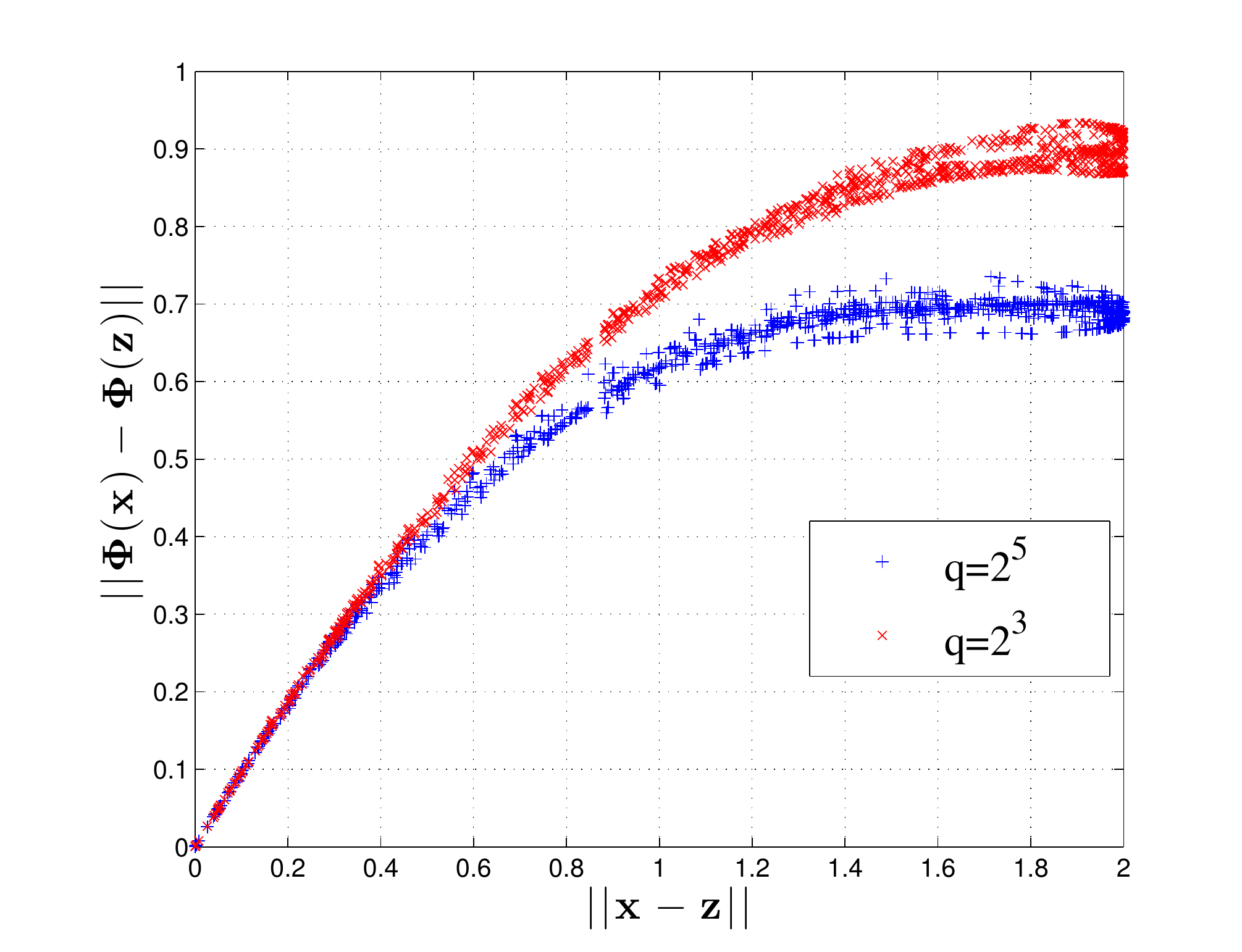}
\caption{The locality induced by the size of the pool $q$. As $q$ increase the locality radius shrinks.}
\label{fig:locality}
\end{figure}
To further illustrate this local linear embedding, we consider a dataset of $33$ images of faces of the  same person of size $112\times 92$ at different angles that we normalize to be unit norm (See Figure \ref{fig:exkernel}) .
 The faces are ordered by their angles, the ordering provided in this dataset is noisy. We extract the maxout features on this dataset for $m=10000,q=12$, and perform principal component analysis on the data in the maxout feature space and project it down to two dimensions on the two largest principal components. We show in Figure \ref{fig:exkernel}, the embedding of this dataset in two dimensions through Maxout followed by a linear PCA   . Each point in this scatter corresponds to a face, the numbering refers to the corresponding order in the given angle labeling. We see that Maxout local linear embedding self organizes the data with respect to the angle of variation, and corrects the noisy labeling. 
\begin{figure}[tb]
\begin{subfigure}
\centering
\includegraphics[width=0.5\linewidth]{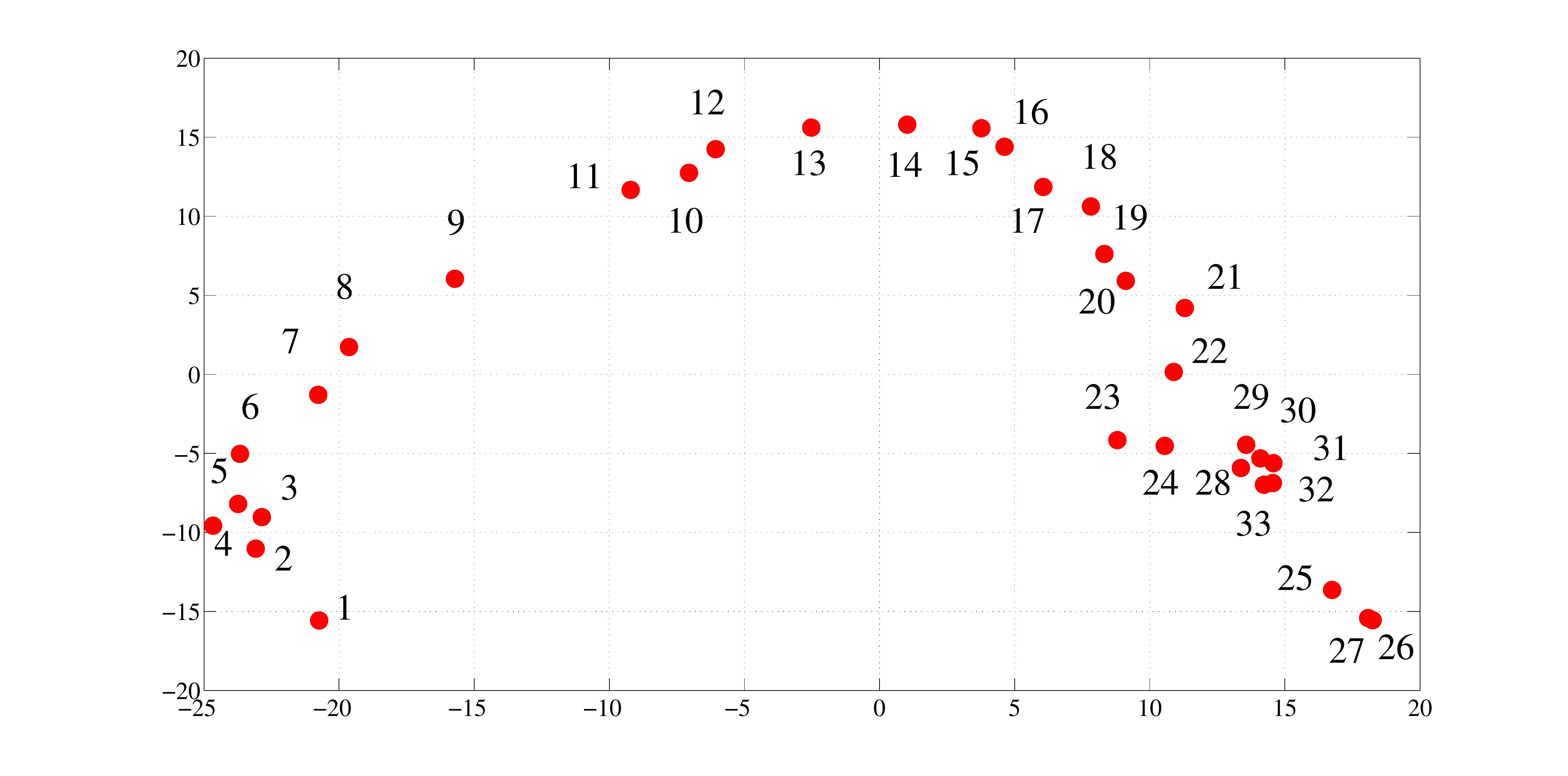}
\end{subfigure}
\begin{subfigure}
\centering
\includegraphics[width=0.5\linewidth]{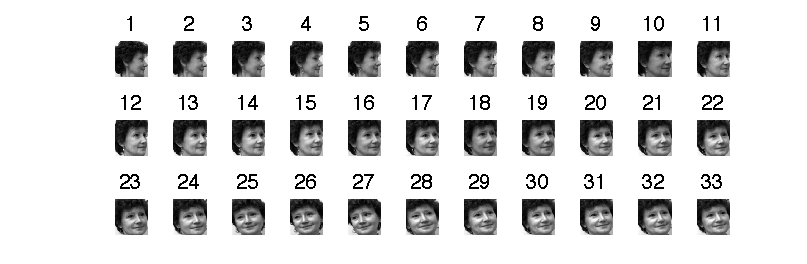}
\end{subfigure}
\caption{Maxout Locally Linear Embedding  in $2D$ (Maxout - LLE) of $33$ faces of the same person at different angles.}
\label{fig:exkernel}
\end{figure}

\subsection{Supervised Learning Applications:  Classification }\label{sec:TIMIT}
In order to perform classification as discussed earlier we lift the data through the maxout feature map $\Phi$, and solve a linear classification problem in the lifted space as described in Equation  \eqref{eq:LSPHI}.

\subsubsection{Digit Classification}
We extract the Maxout random features on the MNIST dataset \cite{lecun}, consisting of $60000$ training examples ($d=784$) and $10000$ test examples among $T=10$ digits. Let $Z=\Phi(X) \in \mathbb{R}^{N\times M}$be the embedded data and $Y\in \mathbb{R}^{N\times T}$ be the class labels using a $\pm1$ encoding. We solve the multi-class problem using a simple regularized least squares, $f(x)=\scalT{\alpha}{\Phi(x)}$, where $\alpha=(Z^{\top}Z+\lambda I)^{-1}Z^{\top}Y$, where $\lambda$ is the regularization parameter chosen on a hold out validation set \cite{gurls}.
\begin{table}[H]
\begin{center}
  \small
    \begin{tabular}{cccccc}
    \hline 
   & $m=100$ &$m=500$ & $m=1000$ & $m=5000$ &$m=10000$ \\ \hline
  $q=1$&$17.68\pm0.28$&$14.76\pm0.23$&$14.64\pm 0.08$&$15.02\pm 0.64$&$15.12\pm0.58$\\
  $q=2$&$16.95\pm0.5321$&$7.98\pm0.22$&$5.62\pm 0.18$&$2.86\pm 0.11$&$2.35\pm0.04$\\
 $q=4$& $18.24\pm0.39$&$7.70\pm0.24$&$5.50\pm0.16$&$2.78\pm0.07$&$\bf{2.23\pm0.07}$\\
 $q=8$&$18.9\pm0.28$&$7.98\pm0.47$&$5.59\pm0.19$&$2.81\pm0.10$&$2.32\pm0.09$\\
 $q=16$&$20.57\pm0.76$&$8.19\pm 0.21$&$5.66\pm0.15$&$2.87\pm0.12$&$2.58\pm0.04$
\end{tabular}
\end{center}
    \caption{Random Maxout on MNIST: Error rates in $\%$. }
    \label{tab:mnist}
\end{table}

%
In table \ref{tab:mnist} we report test errors and standard deviations for various values of $m$ and $q$ in the maxout feature map averaged on $5$ different choices of the random weights in the map. We see that for $q=1$, where the feature map does not introduce any non linearity, the performance of the map for any value of $m$, matches the error rate of a linear classifier that is  
$15 \%$. For $q\neq 1$, we start to see the non linearity introduced by the map as a local linear estimator, for a fixed $q$ the error rate decreases  as $m$ gets large. In this experiment the best error rate is achieved for  $m=10000$ and $q=4$, suggesting that $q=4$ sets the optimal radius of locality for classification. As a baseline an optimal $k-$ nearest neighbor achieves an error rate of $3.09~\%$.


\subsubsection{Phone Classification on TIMIT}
We further evaluated random maxout features on the TIMIT speech phone classification task. Evaluations are reported on the core test set of TIMIT. We utilized essentially the same experimental setup as in \cite{timitRF}: 147 context independent states were used as classification targets; at test time each utterance was decoded using the Viterbi algorithm, and then mapped, as is standard, down to 39 phones for scoring.  As in \cite{timitRF}, $2$ million frames of training data---fMLLR features of dimension $40$ each \cite{MohamedSDRHP11}. spliced with $\pm 5$ frames of context ($d=11\times40=440$)---were utilized. These features were then lifted hrough the random maxout map $\Phi$, and a multinomial logistic regression was subsequently trained using SGD to minimize cross entropy loss. Table \ref{tab:m_vs_q} reports the mean and standard deviation of the performance of random maxout units as a function of number of maxout features, $m$, and number of projections/feature, $q$. Interestingly, even smaller feature maps far outperform using the raw features, and the performance varies very little with initialization seed (5 seeds/result).

\begin{table}[H]
\begin{center}
  \small
    \begin{tabular}{ccccc}
    \hline \hline
\multicolumn{1}{c}{m} & \multicolumn{4}{c}{q} \\
 & 2& 4& 8& 16\\
1250 & 24.7$\pm$ 0.2 &
24.4$\pm$ 0.2 &
25.6$\pm$ 0.2 &
26.0$\pm$ 0.2 \\
2500 & 24.0$\pm$ 0.2 &
23.3$\pm$ 0.3 &
24.7$\pm$ 0.3 &
25.3$\pm$ 0.3 \\
5000 & 23.5$\pm$ 0.1 &
22.9$\pm$ 0.2 &
24.7$\pm$ 0.2 &
24.7$\pm$ 0.4 \\
10000 & 23.2$\pm$ 0.1 &
22.5$\pm$ 0.2 &
24.5$\pm$ 0.2 &
24.7$\pm$ 0.4 \\
20000 & 23.1$\pm$ 0.1 &
 22.3 $\pm$ 0.2 & 
24.3$\pm$ 0.2 &
24.5$\pm$ 0.2 \\

\hline
\hline
\end{tabular}
\end{center}
    \caption{Phone error rate (PER, \%) as a function of number of maxout features,m, and number of linear projections per maxout feature,q, on the TIMIT speech phone classification task. Multinomial logistic regression on the input features yields a PER of 33.1 $\pm$ 0.1\%.}
\label{tab:m_vs_q}
\end{table}

Table \ref{tab:large_scale} summarizes preliminary investigations into scaling up the size of the feature map, where to increase the number of features, projections are shared across random maxout units. Random maxout features appear to perform similarly to random Fourier features on the task. 

\begin{table}[H]
\begin{center}
  \small
    \begin{tabular}{ccccc}
    \hline \hline 
network & \# features (m) & \#projections & \#proj./feature (q) & phone error rate (PER) \\ \hline
Random Maxout & 15K & 15K &q=4      &  23.1   \\ 
Random Maxout & 60K & 15K&q=4   & 22.7    \\ 
Random Maxout & 60K& 60K&q=4      & 22.8   \\ 
Random Maxout & 400K& 15K&q=4 & 22.4     \\ 
Random Maxout & 300K & 300K&q=4    & 22.1 \\ \hline 
Random Fourier  & 400K & 400K&-         & 21.3 \cite{timitRF} \\
ReLU DNN & 4K, & 16K (4Kx 4 layers) & -             &    22.7 \cite{DahlSH13} \\
ReLU DNN w/ dropout & 4K, & 16K (4Kx 4 layers) & - &19.7 \cite{DahlSH13} \\
\hline
\hline
\end{tabular}
\end{center}
    \caption{Phone error rates (PER,\%) on TIMIT. The total number of projections used to produce each feature map are as indicated (here random maxout features draw from a shared pool of projections).}
\label{tab:large_scale}    
\end{table}

%
   
In this paper we presented random maxout feature map as an effective and scalable local linear estimator, and derived risk bounds for learning in this feature space that assesses 
both statistical and approximation errors, in a classification setting. We believe that maxout features, thanks to their conditionally linear structure, can gain further in scalability, and speed, by leveraging the fast  Johnson Lindenstrauss transform, and the doubly stochastic optimization framework of \cite{DaiXHLRBS14}.

\begin{appendix}
\section{Proof of Theorem $1$}

\begin{proof}[Proof of Theorem  $1$]
Assume without loss of generality that $||x||=||z||=1$.
Let $D(x)=\arg\max_{j=1\dots q}\scalT{w_j}{x}$, and $D(z)=\arg\max_{j=1\dots q}\scalT{w_j}{z}$, ties are broken arbitrarily.
By total probability we have:
\begin{align*}
K(x,z)&=\mathbb{E}(h(x)h(z))\\
&= \mathbb{E}\left \{h(x)h(z) | D(x)=D(z)\right\}\mathbb{P}(D(x)=D(z))\\
&+ \mathbb{E}\left\{h(x)h(z)|D(x)\neq D(z)\right\}\mathbb{P}(D(x)\neq D(z))\\
&= \mathbb{E}(\scalT{w_{D(x)}}{x} \scalT{w_{D(x)}}{z}|D(x)=D(z))\mathbb{P}(D(x)=D(z))\\
&+ \mathbb{E}\left\{\scalT{w_{D(x)}}{x} \scalT{w_{D(z)}}{z}|D(x)\neq D(z)\right\}\mathbb{P}(D(x)\neq D(z)).\\
\end{align*}
It is easy to see that the second term in this sum is zero since the gaussians are independent and zero centered : $\mathbb{E}\left\{\scalT{w_{D(x)}}{x} \scalT{w_{D(z)}}{z}|D(x)\neq D(z)\right\}=0$.
We are left with the first term of this sum: 
\begin{align*}
K(x,z)&=\mathbb{E}(h(x)h(z))\\
&=\mathbb{E}(\scalT{w_{D(x)}}{x} \scalT{w_{D(x)}}{z}|D(x)=D(z))\mathbb{P}(D(x)=D(z))\\
&= q \mathbb{E}(\scalT{w_{1}}{x} \scalT{w_{1}}{z}| D(x)=D(z)=1)\mathbb{P}(D(x)=D(z)=1)\\
&=  \mathbb{E}(\scalT{w_{1}}{x} \scalT{w_{1}}{z}| D(x)=D(z)=1) \mathbb{P}(D(x)=D(z)).
\end{align*}
By rotation invariance of gaussians we have:
$$\scalT{w_1}{x}=g  \text{ and } \scalT{w_1}{z} =\scalT{x}{z} g +\sqrt{1-|\scalT{x}{z}|^2}h,$$ where $g$ and $h$  are independent random gaussian variables $g,h \sim \mathcal{N}(0,1)$.\\
Let $E$ be the following event : 
$$E=\{g \text{ is the maximum of $q$ independent gaussians} \}$$
Hence we have:
\begin{align*}
&\mathbb{E}(\scalT{w_{1}}{x} \scalT{w_{1}}{z}| D(x)=D(z)=1)\\
 &=\mathbb{E}\left(g (\scalT{x}{z}) g + \sqrt{1-|\scalT{x}{z}|^2}h) | E \right)\\
&= \scalT{x}{z} \mathbb{E}\left( \left[\max_{j=1\dots q} g_j \right]^2\right).
\end{align*}
Let $\sigma^2(q)=  \mathbb{E}\left( \left[\max_{j=1\dots q} g_j \right]^2\right)$, we have finally:
\begin{equation}\label{eq:Expected}
\mathbb{E}(h(x)h(z))=\sigma^2(q) \scalT{x}{z} \mathbb{P}(D(x)=D(z)).
\end{equation}
$\sigma^2(q)$ is a normalization factor and it is well known that $\sigma^2(q)\sim \log(q)$, hence we are left with 
$$\mathbb{P}(D(x)=D(z)),$$
that is the probability that $x$ and $z$ are not separated by the $q$ hyperplanes, an object that is well studied in $q-$ ways graph cuts approximation algorithms. \\
\noindent The following  lemma is crucial to our proof and  is proved in \cite{Frieze95}, and  allow us to get the final expression of the expected kernel.
\begin{lemma}[\cite{Frieze95}]
For $x,z \in \mathbb{R}^d, ||x||=||z||=1$. Let $\rho=\scalT{x}{z}$, we have therefore:
\begin{equation}
\kappa_q(x,z)=\mathbb{P}\left(D(x)=D(z)\right)=\sum_{i=0}^{\infty}a_i(q)\rho^i
\end{equation}
the taylor series of $\kappa_q$ around $\rho=0$, converges for all $\rho$ in the range $\abs{\rho} \leq 1$.
The coefficients $a_i(q)$, of the expansion are all non negatives and their sum converges to $1$. 
The first $3$ coefficients are $a_0(q)=\frac{1}{q},a_1(q)=\frac{h_1^2(q)}{q-1}$, $a_2(q)=\frac{qh_2^2(q)}{(q-1)(q-2)}$.
$h_{i}(q)=\mathbb{E}\phi_i(\max_{k=j\dots q}\eta_j))$ , where $\eta_{j},j=1\dots q$ are iid standard centered gaussian, and $\phi_{i}$, the normalized Hermite polynomials. 
\label{l:Maxcut}
\end{lemma}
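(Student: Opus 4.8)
The plan is to reduce the statement to a Hermite expansion on a product Gaussian space, where non-negativity of the $a_i(q)$ becomes a Parseval identity, the normalization $\sum_i a_i(q)=1$ falls out of $\kappa_q(1)=1$, and the low-order coefficients follow from permutation symmetry together with Gaussian integration by parts; this is, in essence, the Fourier-analytic core of the Frieze--Jerrum analysis of MAX $q$-CUT. \textbf{Symmetrization.} Put $u_j=\scalT{w_j}{x}$ and $v_j=\scalT{w_j}{z}$; since the $w_j\sim\mathcal N(0,I_d)$ are i.i.d.\ and $\nor x=\nor z=1$, the pairs $(u_j,v_j)$, $j=1\dots q$, are i.i.d.\ centered bivariate Gaussians with unit marginals and $\mathbb E(u_jv_j)=\rho$. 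Writing $\psi(t_1,\dots,t_q)=\mathbbm 1[t_1\ge t_k\ \text{for all }k]$ and noting that ties occur with probability zero, exchangeability of the indices gives
$$\kappa_q(x,z)=\mathbb P(D(x)=D(z))=\sum_{k=1}^q\mathbb P(D(x)=D(z)=k)=q\,\mathbb E\big(\psi(u_1,\dots,u_q)\,\psi(v_1,\dots,v_q)\big).$$

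\textbf{Noise operator and Parseval.} The pairs being independent across $j$, the conditional law of $V=(v_1,\dots,v_q)$ given $U=(u_1,\dots,u_q)$ is that of the product Ornstein--Uhlenbeck operator $T_\rho$ applied coordinatewise, so $\mathbb E(\psi(U)\psi(V))=\langle\psi,T_\rho\psi\rangle_{L^2(\gamma^{\otimes q})}$ with $\gamma$ the standard Gaussian. Expand $\psi=\sum_{\mathbf i}c_{\mathbf i}\Phi_{\mathbf i}$ in the orthonormal product-Hermite basis $\Phi_{\mathbf i}(\mathbf t)=\prod_{j}\phi_{i_j}(t_j)$, where $c_{\mathbf i}=\mathbb E\big(\psi(\eta_1,\dots,\eta_q)\,\Phi_{\mathbf i}(\eta_1,\dots,\eta_q)\big)$ and the $\eta_j$ are i.i.d.\ standard Gaussian. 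Since $T_\rho\Phi_{\mathbf i}=\rho^{|\mathbf i|}\Phi_{\mathbf i}$, for $|\rho|<1$ one gets
$$\kappa_q(x,z)=q\sum_{\mathbf i}\rho^{|\mathbf i|}c_{\mathbf i}^2=\sum_{i\ge 0}a_i(q)\,\rho^i,\qquad a_i(q)=q\!\!\sum_{|\mathbf i|=i}\!c_{\mathbf i}^2\ \ge 0,$$
the rearrangement being legitimate because $\psi$ is a bounded indicator, whence $\sum_{\mathbf i}c_{\mathbf i}^2=\nor\psi_{L^2}^2=\mathbb E\psi=1/q$ by Parseval; the same identity yields $\sum_i a_i(q)=q\sum_{\mathbf i}c_{\mathbf i}^2=1$. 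As the nonnegative coefficients are summable, the power series $\sum_i a_i(q)\rho^i$ converges absolutely on all of $\abs\rho\le 1$, and, agreeing with $\kappa_q$ on $(-1,1)$, it agrees with $\kappa_q$ on $[-1,1]$ by Abel's theorem and continuity of $\kappa_q$.

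\textbf{Low-order coefficients.} The degree-zero term is $c_{\mathbf 0}=\mathbb E\psi=1/q$, hence $a_0(q)=q(1/q)^2=1/q$. For degree one, $\phi_1(t)=t$; conditioning on which coordinate attains the maximum gives $c_{(1,0,\dots,0)}=\mathbb E(\psi\,\eta_1)=\tfrac1q\mathbb E(\max_j\eta_j)=h_1(q)/q$, while the identity $\sum_k\mathbbm 1[\eta_k=\max]\equiv 1$ together with $\mathbb E\eta_1=0$ forces each of the other $q-1$ degree-one coefficients to equal $-h_1(q)/(q(q-1))$; summing the squares gives $a_1(q)=q\big(h_1(q)^2/q^2+(q-1)h_1(q)^2/(q^2(q-1)^2)\big)=h_1(q)^2/(q-1)$. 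Degree two is handled the same way: the degree-two multi-indices come in two shapes (one coordinate with $i_j=2$, built from $\phi_2(t)=(t^2-1)/\sqrt2$, or two coordinates each with $i_j=1$), permutation symmetry collapses the coefficients of each shape to a single unknown, the Gaussian integration-by-parts identity $\mathbb E(\eta_j f)=\mathbb E(\partial_j f)$ evaluates the resulting integrals, and the linear relations coming again from $\sum_k\mathbbm 1[\eta_k=\max]\equiv 1$ pin them down; collecting terms gives $a_2(q)=q\,h_2(q)^2/((q-1)(q-2))$.

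\textbf{Main obstacle.} The conceptual steps (symmetrization and the noise-operator/Parseval step) are short and already deliver non-negativity, summability, and convergence on the closed interval. The real labor is the computation of $a_2(q)$: one must correctly enumerate which degree-two product-Hermite coefficients are identified by the symmetry group, carry out the several Gaussian integrations by parts, and verify that the linear constraints force the answer to collapse onto the single moment $h_2(q)$. A minor point of care is that the operator identity in the second step holds only for $\abs\rho<1$, which is precisely why the endpoints $\rho=\pm1$ must be recovered by the Abel/continuity argument rather than by substituting into the series termwise.
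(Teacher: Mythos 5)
The paper never proves this lemma internally: it is imported verbatim from Frieze and Jerrum \cite{Frieze95}, and the appendix's proof of Theorem 1 simply invokes it, so there is no in-paper argument to compare against. What you wrote is therefore a genuine self-contained derivation, and it is correct: the symmetrization $\kappa_q=q\,\mathbb{E}\left(\psi(U)\psi(V)\right)$, the identification $\mathbb{E}\left(\psi(U)\psi(V)\right)=\scalT{\psi}{T_\rho\psi}_{L^2(\gamma^{\otimes q})}$, and the Parseval bookkeeping deliver non-negativity, $\sum_i a_i(q)=1$, and convergence on $\abs{\rho}\le 1$ without gaps (in fact the spectral identity $\scalT{\psi}{T_\rho\psi}=\sum_{\mathbf i}\rho^{\abs{\mathbf i}}c_{\mathbf i}^2$ already holds at $\rho=\pm 1$ since $T_{\pm 1}$ are bounded, so your Abel/continuity step is a harmless redundancy), and your degree-$0$ and degree-$1$ computations reproduce $a_0(q)=1/q$ and $a_1(q)=h_1^2(q)/(q-1)$ exactly. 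The one piece you defer, the degree-two coefficient, does close along the lines you indicate: the pure-$\phi_2$ coefficients are $h_2(q)/q$ and $-h_2(q)/(q(q-1))$ by the same argument as degree one; for the mixed coefficient $c_B=\mathbb{E}\left(\eta_1\eta_2\mathbbm{1}[\eta_1=\max]\right)$, exchangeability gives $\mathbb{E}\left(\eta_1(\eta_1+\dots+\eta_q)\mathbbm{1}[\eta_1=\max]\right)=\frac{1}{q}\mathbb{E}\left(M(\eta_1+\dots+\eta_q)\right)=\frac{1}{q}$ by Stein's identity $\mathbb{E}(M\eta_j)=\mathbb{P}(\eta_j=\max)=1/q$, while $\mathbb{E}\left(\eta_1^2\mathbbm{1}[\eta_1=\max]\right)=\sigma^2(q)/q$, hence $(q-1)c_B=(1-\sigma^2(q))/q=-\sqrt{2}\,h_2(q)/q$; combined with $c_{B'}=-2c_B/(q-2)$ from your constraint $\sum_k\mathbbm{1}[\eta_k=\max]\equiv 1$, the sum of squares collapses to $a_2(q)=q\,h_2^2(q)/((q-1)(q-2))$ as claimed. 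So your proposal is correct and, unlike the paper, actually proves the statement rather than citing it; it would be worth writing out the degree-two bookkeeping explicitly if this argument were to be included.
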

\noindent By lemma \ref{l:Maxcut} we have finally:
\begin{equation}
K(x,z)=\mathbb{E}(h(x)h(z))= \sigma^2(q)\scalT{x}{z}\kappa_q(x,z),
\end{equation}
where $\kappa_q(x,z)= \sum_{i=0}^{\infty}a_i(q)(\scalT{x}{z})^i$, is a non-linear kernel, values of $a_i(q)$ are given in the above lemma. 
\end{proof}

\section{Learning with Random Maxout Features}
%
In this section we state the proof of Theorem 2.
We start with a preliminary Lemma that bounds $\phi(x,W)$ uniformly on the set $\XX$, this will be crucial in our derivations.
\begin{lemma}[Bounding $\sup_{x\in \mathcal{M} }\left|\phi(x,W)\right|$]\label{lem:boundsup}
Let $\mathcal{M}=\XX \cap  \mathbb{S}^{d-1}$. Let $d_{\mathcal{M}}$ be the Assouad dimension of $\mathcal{M}$ and $\rm{diam}(\mathcal{M})$ be the \rm{diam}eter of $\mathcal{M}$. Let $\delta>0$, we have  for a numeric constant $C_1$: 
$$\sup_{x\in \mathcal{M}} \left|\phi(x,W)\right|=\sup_{x\in \mathcal{M}} \left|\max_{j=1\dots q}\scalT{w_j}{x}\right| \leq 	C_1 \sqrt{ d_{\mathcal{M}}\log \left(\frac{{\rm{diam}}(\mathcal{M})\sqrt{d}}{\delta}\right)+\log(q+1)},$$ with probability at least $1-  \delta-2 e^{-cd/4}.$

\end{lemma}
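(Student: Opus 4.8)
The plan is a covering-plus-union-bound argument in which the Assouad dimension controls the size of the net and a Gaussian Lipschitz bound lets us pass from the net to all of $\mathcal{M}$. \textbf{Step 1 (pointwise sub-Gaussian tail).} Fix $x\in\mathbb{S}^{d-1}$. Since the $w_j$ are independent and $\scalT{w_j}{x}\sim\mathcal{N}(0,1)$, a union bound over $j=1\dots q$ together with the standard Gaussian tail gives $\mathbb{P}\!\left(\left|\phi(x,W)\right|>t\right)\leq\mathbb{P}\!\left(\max_{j}\left|\scalT{w_j}{x}\right|>t\right)\leq 2q\,e^{-t^2/2}$ for all $t\geq0$. \textbf{Step 2 (Lipschitz control of $x\mapsto\phi(x,W)$).} Using $\left|\max_j a_j-\max_j b_j\right|\leq\max_j\left|a_j-b_j\right|$ we get, for $x,x'\in\mathbb{S}^{d-1}$, $\left|\phi(x,W)-\phi(x',W)\right|\leq\max_j\left|\scalT{w_j}{x-x'}\right|\leq\big(\max_j\nor{w_j}\big)\nor{x-x'}$. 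A standard $\chi^2$ concentration bound gives $\max_j\nor{w_j}\leq C_2\sqrt{d}$ with probability at least $1-2e^{-cd/4}$ — this is exactly the event responsible for the additive $2e^{-cd/4}$ in the statement — and on that event $x\mapsto\phi(x,W)$ is $C_2\sqrt{d}$-Lipschitz on $\mathcal{M}$.

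\textbf{Step 3 (covering $\mathcal{M}$ via the Assouad dimension).} By the definition of $d_{\mathcal{M}}$, any ball of radius $\varrho$ intersected with $\mathcal{M}$ is covered by $2^{d_{\mathcal{M}}}$ balls of radius $\varrho/2$; iterating this $k=\lceil\log_2(\mathrm{diam}(\mathcal{M})/r)\rceil$ times, starting from a ball of radius $\mathrm{diam}(\mathcal{M})$ containing $\mathcal{M}$, produces an $r$-net $\mathcal{N}_r\subset\mathcal{M}$ with $\left|\mathcal{N}_r\right|\leq 2^{k\,d_{\mathcal{M}}}\leq\big(2\,\mathrm{diam}(\mathcal{M})/r\big)^{d_{\mathcal{M}}}$. \textbf{Step 4 (assemble).} On the event of Step 2, Step 1 and a union bound over $\mathcal{N}_r$ give $\max_{y\in\mathcal{N}_r}\left|\phi(y,W)\right|\leq t$ with probability at least $1-2q\left|\mathcal{N}_r\right|e^{-t^2/2}$; for arbitrary $x\in\mathcal{M}$, choosing the nearest $y\in\mathcal{N}_r$ and invoking Step 2 yields $\left|\phi(x,W)\right|\leq t+C_2\sqrt{d}\,r$. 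Taking $r=\delta/\sqrt{d}$ makes the second term a constant, and choosing $t$ so that $2q\left|\mathcal{N}_r\right|e^{-t^2/2}=\delta$ gives $t=\sqrt{2\log(2q/\delta)+2d_{\mathcal{M}}\log\!\big(2\,\mathrm{diam}(\mathcal{M})\sqrt{d}/\delta\big)}$. Absorbing $\log(1/\delta)$ into the $d_{\mathcal{M}}\log(\cdots/\delta)$ term, $\log(2q)$ into $\log(q+1)$, and collecting constants into a single $C_1$, one obtains $\sup_{x\in\mathcal{M}}\left|\phi(x,W)\right|\leq C_1\sqrt{d_{\mathcal{M}}\log\!\big(\mathrm{diam}(\mathcal{M})\sqrt{d}/\delta\big)+\log(q+1)}$ with probability at least $1-\delta-2e^{-cd/4}$, as claimed.

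The main obstacle is the geometric bookkeeping in Step 3: the Assouad dimension is a statement about ratio-$2$ coverings only, so one must iterate the doubling property $\log_2(\mathrm{diam}(\mathcal{M})/r)$ times to get an explicit $r$-covering number of the form $(\mathrm{diam}(\mathcal{M})/r)^{d_{\mathcal{M}}}$, and then choose $r$ in tandem with the Lipschitz constant $C_2\sqrt{d}$ so that the error from discretization is negligible while only $\sqrt{d}$ (and not, say, $\sqrt{q\,d}$) appears inside the logarithm. Once this is set up correctly, the rest is routine Gaussian concentration and a union bound.
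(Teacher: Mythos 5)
Your proposal follows essentially the same route as the paper's proof: an $r$-net of $\mathcal{M}$ of size at most $\left(2\,\mathrm{diam}(\mathcal{M})/r\right)^{d_{\mathcal{M}}}$ obtained from the Assouad dimension, a pointwise Gaussian tail bound union-bounded over the net and over the $q$ projections, and a Lipschitz/norm-concentration step for the Gaussian vectors (the source of the $2e^{-cd/4}$ term) to pass from the net to all of $\mathcal{M}$, followed by solving for $t$ and absorbing constants. The only cosmetic differences are that you bound $\max_{j}\nor{w_j}$ directly (strictly this union bound costs an extra factor $q$ in the $e^{-cd/4}$ term, absorbable into the constants, and it is if anything more careful than the paper's bound on the selected vectors $w_{D(x)},w_{D(z)}$) and that you fix $r=\delta/\sqrt{d}$ rather than tying $r$ to $t$; neither affects the stated conclusion.
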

\begin{proof}
Consider an $\epsilon-$Net that covers $\XX$ with balls of radius $r$ and centers $\{x_i\}_{i=1\dots T}$.  We have by definition of the Assouad dimension, the maximum number of balls $T$ is less than  
$\left(\frac{2 \rm{diam}(\mathcal{M})}{r}\right)^{d_{\mathcal{M}}}$.
 Assume we have: $|\phi(x,W)-h(z,W)|=\scalT{w_{D(x)}}{x}-\scalT{w_{D(z)}}{z}$, meaning $\scalT{w_{D(x)}}{x}-\scalT{w_{D(z)}}{z}>0$. 
\begin{eqnarray*}
\phi(x,W)-\phi(z,W)&=&\scalT{w_{D(x)}}{x}-\scalT{w_{D(z)}}{z}\\
&=& \scalT{W_{D(x)}}{x-z}\\&-& \underbrace{\scalT{w_{D(z)}-w_{D(x)}}{z}}_{\geq 0} \\
&\leq& ||w_{D(x)}||_{2} \nor{x-z}_{2} ,
\end{eqnarray*}
where the inequality follows from the definition of $D(z)$, and the Cauchy-Schawrz inequality.
Similarly if we have $|\phi(x,W)-\phi(z,W)|=\scalT{w_{D(z)}}{z}-\scalT{w_{D(x)}}{x}$, we have:
\begin{eqnarray*}
\phi(x,W)-\phi(z,W))&\leq& ||w_{D(z)}||_{2} ||x-z||_{2} 
\end{eqnarray*}
We conclude therefore that: 
\begin{equation*}
\left|\phi(x,W)-\phi(z,W)\right|\leq \max\left(\nor{w_{D(x)}},\nor{w_{D(z)}}\right) \nor{x-z}_{2}\leq\left( \nor{w_{D(x)}}+ \nor{w_{D(z)}} \right)  \nor{x-z}_{2}
\end{equation*}
Let $L= \nor{w_{D(x)}}_2+ \nor{w_{D(z)}}_2$.

Let $t>0$, we have $\sup_{x \in \mathcal{M}}   \left|\phi(x,W)\right| < t$, if the following two events hold:
$$E_{1}=\left\{ \sup_{x_i, i=1\dots T}\left| \phi(x_i,W)\right|  < \frac{t}{2}\right\} \text{and } E_2=\left\{ L \leq \frac{t}{2r}\right\}.$$

On the first hand: 
\begin{align*}
\mathbb{P}(E^c_1)&= \mathbb{P} \left(  \sup_{x_i, i=1\dots T} \left| \phi(x_i,W)\right|  \geq  \frac{t}{2}\right)\\
&= \mathbb{P}\left(\cup_{i=1}^T \{ \left|\phi(x_i,W)\right|  \geq  \frac{t}{2} \}\right)\\
&\leq \sum_{i=1}^T \mathbb{P}\left( \left|\phi(x_i,W)\right| \geq  \frac{t}{2} \right) \\
&= T \mathbb{P}\left(\left|\max_{j=1\dots q } \scalT{w_j}{x} \right| \geq  \frac{t}{2} \right).
\end{align*}
Note that by a union bound we have: 
$$\mathbb{P}\left(\max_{j=1\dots q } \scalT{w_j}{x}  \geq  \frac{t}{2} \right)= \mathbb{P}\left(\exists j , \scalT{w_j}{x} \geq \frac{t}{2}\right) \leq q \mathbb{P}(\scalT{w}{x}\geq \frac{t}{2}) \leq q e^{-t^2/8},$$
and by independence of $w_j$ we have also:
$$\mathbb{P}\left(\max_{j=1\dots q } \scalT{w_j}{x}  \leq - \frac{t}{2} \right)= \mathbb{P}\left(\forall j , \scalT{w_j}{x} \leq- \frac{t}{2}\right) =\left( \mathbb{P}(\scalT{w}{x}\leq- \frac{t}{2})\right)^q \leq  e^{-qt^2/8}.$$
Putting together theses to bounds we have:
 $$\mathbb{P}\left(\left|\max_{j=1\dots q } \scalT{w_j}{x} \right| \geq  \frac{t}{2} \right)\leq qe^{-t^2/8}+ e^{-qt^2/8}.$$
 The covering number $T$ of $\XX$ is also bounded as follows \cite{Assouad}:
 $$T \leq \left(\frac{2\rm{diam}(\mathcal{M})}{r}\right)^{d_{\mathcal{M}}}.$$
Hence we have for $q>1$:
$$\mathbb{P}(E^c_1) \leq \left(\frac{2\rm{diam}(\mathcal{M})}{r}\right)^{d_{\mathcal{M}}} \left( qe^{-t^2/8}+ e^{-qt^2/8} \right)\leq \left(\frac{2\rm{diam}(\mathcal{M})}{r}\right)^{d_{\mathcal{M}}}(q+1)e^{-t^2/8} .$$
On the other hand,  for a universal constant $c$, and for $\eps \in (0,1)$ \cite{vershyninreview}:
$$\mathbb{P}(\nor{w}_2 \geq \sqrt{d}(1+\eps))\leq e^{-c\eps^2 d}.$$
Set $\frac{t}{2r}=\sqrt{d}(1+\eps)$, hence $\mathbb{P}(E^c_2) \leq 2 e^{-c\eps^2d}$.\\
It follows that  for $t>1$:
\begin{align*}
\mathbb{P}(\sup_{x\in \mathcal{M}} \left|\phi(x,W)\right| \geq t ) &\leq \mathbb{P}(E^1_c \cup E^2_c)\\
&\leq \mathbb{P}(E^1_c) +\mathbb{P}(E^2_c)\\
&\leq \left(\frac{4 \sqrt{d}(1+\eps)\rm{diam}(\mathcal{M})}{t}\right)^{d_{\mathcal{M}}} (q+1)e^{-t^2/8}+2 e^{-c\eps^2d}\\
&\leq \left(4\sqrt{d}(1+\eps)\rm{diam}(\mathcal{M})\right)^{d_{\mathcal{M}}}(q+1)e^{-t^2/8}+2 e^{-c\eps^2d}.
\end{align*}

Hence for $\eps=\frac{1}{2}, t>1$:
$$\sup_{x\in \mathcal{M}} \left|\phi(x,W)\right| \leq t , $$ with probability at least $1-  \left(6  { \rm{diam}}(\mathcal{M})\sqrt{d}\right)^{d_{\mathcal{M}}} (q+1)e^{-t^2/8}-2 e^{-cd/4}.$ 

Hence we have  for a numeric constant $C_1$: 
$$\sup_{x\in \mathcal{M}} \left|\phi(x,W)\right| \leq 	C_1 \sqrt{ d_{\mathcal{M}}\log \left(\frac{{\rm{diam}}(\mathcal{M})\sqrt{d}}{\delta}\right)+\log(q+1)},$$ with probability at least $1-  \delta-2 e^{-cd/4}.$ 
\end{proof}
The following Lemma shows that any function $f \in \mathcal{F}$, can be approximated by a function $\hat{f} \in \hat{\mathcal{F}}$:
\begin{lemma}\label{lem:approx}[Approximation Error.]Let $f$ be a function in $\mathcal{F}$. Then for $\delta>0$, there exists a function $\hat{f} \in \hat {\mathcal{F}}$ such that:
$$\nor{\hat{f} -f}_{\mathcal{L}^2(\XX,\rho_{\mathcal{M}})}\leq CC_1  \sqrt{\frac{d_{\mathcal{M}}\log \left(\frac{{\rm{diam}}(\mathcal{M})\sqrt{d}}{\delta}\right)+\log(q+1)}{{m}}}\left(1+\sqrt{2\log\left(\frac{1}{\delta}\right)}\right)$$
with probability at least $1-2\delta-2 e^{-cd/4}$.
\end{lemma}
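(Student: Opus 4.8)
The plan is to run the Monte-Carlo sampling argument of \cite{Rah_Rec:2008:allerton}, feeding in the uniform bound of Lemma \ref{lem:boundsup} wherever boundedness of $\phi(\cdot,W)$ is needed. Any $f\in\mathcal{F}$ can be written as $f(x)=\int\alpha(W)\phi(x,W)\,dW=\mathbb{E}_{W\sim p}\big[\beta(W)\phi(x,W)\big]$ with $\beta(W)=\alpha(W)/p(W)$ and $\sup_W|\beta(W)|\le C$. Drawing $W^1,\dots,W^m$ i.i.d.\ from $p$ (the same draws that define $\Phi$) and setting $\hat f(x)=\frac{1}{m}\sum_{\ell=1}^m\beta(W^\ell)\phi(x,W^\ell)$, the coefficients $\alpha_\ell:=\beta(W^\ell)/m$ satisfy $\|\alpha\|_\infty\le C/m$, so $\hat f\in\hat{\mathcal{F}}$; moreover $\mathbb{E}_{\{W^\ell\}}\hat f(x)=f(x)$ pointwise, so $\hat f$ is an unbiased estimator of $f$ assembled from the random features.

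Write $B:=C_1\sqrt{d_{\mathcal{M}}\log\!\big(\mathrm{diam}(\mathcal{M})\sqrt d/\delta\big)+\log(q+1)}$, the bound of Lemma \ref{lem:boundsup}, and split $\|\hat f-f\|_{L^2(\XX,\rho_{\mathcal{M}})}\le\mathbb{E}\|\hat f-f\|_{L^2}+\big(\|\hat f-f\|_{L^2}-\mathbb{E}\|\hat f-f\|_{L^2}\big)$. For the mean term, Jensen and the fact that the summands of $\hat f-f$ are i.i.d.\ and centred give $\mathbb{E}\|\hat f-f\|_{L^2}\le\big(\mathbb{E}\|\hat f-f\|_{L^2}^2\big)^{1/2}=\big(\frac{1}{m}\int\mathrm{Var}_W(\beta(W)\phi(x,W))\,d\rho_{\mathcal{M}}(x)\big)^{1/2}\le\big(\frac{C^2}{m}\int\mathbb{E}_W\phi(x,W)^2\,d\rho_{\mathcal{M}}(x)\big)^{1/2}$; since $\|x\|=1$ on $\mathcal{M}$, rotation invariance makes $\langle w_j,x\rangle$ standard i.i.d.\ Gaussian, so $\mathbb{E}_W\phi(x,W)^2=\mathbb{E}(\max_j g_j)^2=\sigma^2(q)\lesssim\log(q+1)\le B^2$, whence $\mathbb{E}\|\hat f-f\|_{L^2}\le CB/\sqrt m$. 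For the deviation term I would use bounded differences: replacing a single $W^\ell$ moves $\hat f$ by at most $\frac{2}{m}\sup_{x\in\mathcal{M}}|\beta(W^\ell)\phi(x,W^\ell)|\le\frac{2CB}{m}$ in $L^2$ norm, uniformly over the event $\mathcal{A}=\{\sup_{x\in\mathcal{M}}|\phi(x,W^\ell)|\le B\ \forall\,\ell\}$, which by Lemma \ref{lem:boundsup} and a union bound over $\ell$ (rescaling $\delta$, the extra $\log m$ being absorbed into $C_1$) holds with probability at least $1-\delta-2e^{-cd/4}$. McDiarmid's inequality applied to $(W^1,\dots,W^m)\mapsto\|\hat f-f\|_{L^2}$ then yields $\|\hat f-f\|_{L^2}-\mathbb{E}\|\hat f-f\|_{L^2}\le CB\sqrt{2\log(1/\delta)/m}$ with probability at least $1-\delta$; adding the two bounds and combining the failure probabilities gives $\|\hat f-f\|_{L^2}\le\frac{CB}{\sqrt m}\big(1+\sqrt{2\log(1/\delta)}\big)$ with probability at least $1-2\delta-2e^{-cd/4}$, which is the claim.

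The main obstacle is that $\phi(x,W)=\max_j\langle w_j,x\rangle$ admits no deterministic upper bound in $W$, so neither the variance factor nor, more seriously, the bounded-difference constant is available unconditionally; everything must be routed through the high-probability estimate of Lemma \ref{lem:boundsup}. The two delicate points are (i) making McDiarmid rigorous, since its bounded-difference hypothesis must hold for \emph{every} configuration of $(W^1,\dots,W^m)$, not just those in $\mathcal{A}$ --- cleanest via truncating each $\phi(\cdot,W^\ell)$ at level $B$, checking that on $\mathcal{A}$ the truncation leaves $\hat f$ unchanged and alters its mean negligibly; and (ii) the bookkeeping of the union bound over the $m$ columns so that the final failure probability retains the form $2\delta+2e^{-cd/4}$ rather than picking up a factor $m$. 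Beyond that, the argument is the routine Rahimi--Recht computation.
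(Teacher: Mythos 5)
Your proposal follows essentially the same route as the paper's proof: write $f$ as an expectation over $W$, take the empirical average $\hat f(x)=\frac{1}{m}\sum_{\ell}\beta(W^{\ell})\phi(x,W^{\ell})\in\hat{\mathcal{F}}$, bound the mean error by $CB/\sqrt{m}$ and the fluctuation by McDiarmid with bounded-difference constant of order $CB/m$ supplied by Lemma \ref{lem:boundsup}, and combine failure probabilities to reach $1-2\delta-2e^{-cd/4}$. The only differences are cosmetic: you bound the mean term unconditionally via $\mathbb{E}_W\phi(x,W)^2=\sigma^2(q)$ rather than through the conditional bound $CM$, and you explicitly flag the truncation and per-column union-bound bookkeeping that the paper handles by conditioning on the event $E$ and leaving implicit.
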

\begin{proof}[Proof of Lemma \ref{lem:approx}] Let $f \in \mathcal{F}, f(x)= \int \alpha(W) \phi(x,W) dW $.$\text{ Let } {f_{\ell}}(x)=\frac{ \alpha(W^{\ell})}{p(W^{\ell})}\phi(x,W_{\ell}).$We have the following: $\mathbb{E}_{W}(f_{\ell})=f$, and $\frac{1}{m}\mathbb{E}_{W}(\sum_{\ell=1}^m f_{\ell})=f$.
Consider the Hilbert space $\mathcal{L}^2(\XX,\rho_{\mathcal{M}})$, with dot product:
$\scalT{f}{g}_{\mathcal{L}^2(\XX,\rho_{\mathcal{M}})}=\int_{\XX} f(x)g(x)d\rho_{\mathcal{M}}(x)$.\\
$$|| f_{\ell}||_{\mathcal{L}^2(\XX,\rho_{\mathcal{M}})} = \sqrt{\int_{\XX} \left( \frac{\alpha(W^{\ell})}{p(W^{\ell})}\right)^2 (\phi(x,W^{\ell}))^2 d\rho_{\mathcal{M}}(x)},$$
Let $E$ and $F$ be the event defined as follows: 
$$E=\left\{ \sup_{x\in \mathcal{M}} \left|\phi(x,W)\right|) \leq M \right\}$$
$$ F= \left\{ \nor{\frac{1}{m}\sum_{j=1}^m f_j -f}_{\mathcal{L}^2(\XX,\rho_{\mathcal{M}})}>t\right\}$$
Conditioned on E we have:
$$|| f_{\ell}||_{\mathcal{L}^2(\XX,\rho_{\mathcal{M}})} \leq CM. $$
$$ \mathbb{P}\left(F\right) = \mathbb{P}\left( F|E\right)\mathbb{P}(E)+  \mathbb{P}\left( F|E^c\right)\mathbb{P}(E^c)\leq  \mathbb{P}\left( F|E\right)+\mathbb{P}(E^c).$$
Conditioned on the event $E$, we can apply McDiarmid inequality and we have:
$$\mathbb{P}(F|E) \leq \exp\left(-\frac{mt^2}{2M^2C^2}\right)=\delta_1 $$
For $\delta>0$ set $M= 	C_1 \sqrt{ d_{\mathcal{M}}\log \left(\frac{{\rm{diam}}(\mathcal{M})\sqrt{d}}{\delta}\right)+\log(q+1)} $ applying Lemma \ref{lem:boundsup} we have:
$$\mathbb{P}(E^c) \leq 1-\delta -2e^{-c d/4}$$

We have therefore with probability $1-\delta-2e^{-c d/4}-\delta_1$:\\
\begin{equation}\label{eq:1}
\nor{\frac{1}{m}\sum_{j=1}^m f_j -f}_{\mathcal{L}^2(\XX,\rho_{\mathcal{M}})}\leq CC_1  \sqrt{\frac{d_{\mathcal{M}}\log \left(\frac{{\rm{diam}}(\mathcal{M})\sqrt{d}}{\delta}\right)+\log(q+1)}{{m}}}\left(1+\sqrt{2\log\left(\frac{1}{\delta_1}\right)}\right).
\end{equation}
\end{proof}

The following Lemma shows how the approximation of functions in $\mathcal{F}$, by functions in $\hat{\mathcal{F}}$, transfers to the expected Risk:
\begin{lemma}[Bound on the Approximation Error]\label{lem:comp}
Let $f\in \mathcal{F}$, fix $\delta>0$. There exists a function $\hat{f} \in \hat{\mathcal{F}}$, such that:
$$\mathcal{E}_{V}(\hat{f})\leq \mathcal{E}_{V}(f)+LCC_1  \sqrt{\frac{d_{\mathcal{M}}\log \left(\frac{{\rm{diam}}(\mathcal{M})\sqrt{d}}{\delta}\right)+\log(q+1)}{{m}}}\left(1+\sqrt{2\log\left(\frac{1}{\delta}\right)}\right) $$
with probability at least $1-2\delta-2 e^{-cd/4}$.
\end{lemma}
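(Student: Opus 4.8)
The plan is to combine the $\mathcal{L}^2$-approximation guarantee of Lemma \ref{lem:approx} with the Lipschitz continuity of the loss $V$; no new probabilistic machinery is needed. Fix $f \in \mathcal{F}$ and $\delta > 0$. Applying Lemma \ref{lem:approx} with its free parameter $\delta_1$ set equal to $\delta$ yields a function $\hat{f} \in \hat{\mathcal{F}}$ such that
$$\nor{\hat{f} - f}_{\mathcal{L}^2(\XX,\rho_{\mathcal{M}})} \leq CC_1 \sqrt{\frac{d_{\mathcal{M}}\log\left(\frac{{\rm{diam}}(\mathcal{M})\sqrt{d}}{\delta}\right)+\log(q+1)}{m}}\left(1+\sqrt{2\log(1/\delta)}\right)$$
on an event of probability at least $1 - 2\delta - 2e^{-cd/4}$. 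It then remains only to transfer this $\mathcal{L}^2$ control on the function difference to the difference of expected risks.

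For that step I would bound $\mathcal{E}_{V}(\hat{f}) - \mathcal{E}_{V}(f)$ directly from the definition of $\mathcal{E}_V$. Using the triangle inequality, the $L$-Lipschitz property of $V$, and the fact that $\abs{y} = 1$ for $y \in \YY = \set{-1,1}$,
$$\abs{\mathcal{E}_{V}(\hat{f}) - \mathcal{E}_{V}(f)} \leq \int_{\mathcal{M}} \sum_{y \in \YY} \abs{V(y\hat{f}(x)) - V(yf(x))}\,\rho_y(x)\,d\rho_{\mathcal{M}}(x) \leq L \int_{\mathcal{M}} \abs{\hat{f}(x) - f(x)} \sum_{y \in \YY}\rho_y(x)\,d\rho_{\mathcal{M}}(x).$$
Since $\sum_{y\in\YY}\rho_y(x) = 1$, the right-hand side is exactly $L\,\nor{\hat{f} - f}_{\mathcal{L}^1(\XX,\rho_{\mathcal{M}})}$, and because $\rho_{\mathcal{M}}$ is a probability measure, Cauchy–Schwarz (equivalently Jensen's inequality) gives $\nor{\hat{f} - f}_{\mathcal{L}^1(\XX,\rho_{\mathcal{M}})} \leq \nor{\hat{f} - f}_{\mathcal{L}^2(\XX,\rho_{\mathcal{M}})}$.

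Substituting the bound from Lemma \ref{lem:approx} into $\mathcal{E}_{V}(\hat{f}) \leq \mathcal{E}_{V}(f) + L\,\nor{\hat{f} - f}_{\mathcal{L}^2(\XX,\rho_{\mathcal{M}})}$ produces the claimed inequality on the same event of probability at least $1 - 2\delta - 2e^{-cd/4}$. I expect there to be no real obstacle in this argument: the substantive work — controlling $\sup_{x\in\mathcal{M}}\abs{\phi(x,W)}$ via the Assouad dimension in Lemma \ref{lem:boundsup}, and the McDiarmid concentration in Lemma \ref{lem:approx} — has already been carried out. The only points needing care are the elementary reduction of the risk difference to an $\mathcal{L}^1$ (hence $\mathcal{L}^2$) norm using $L$-Lipschitzness and $\sum_y \rho_y = 1$, and the bookkeeping of the failure probability and the $\log(1/\delta)$ term, both of which are inherited verbatim from Lemma \ref{lem:approx}.
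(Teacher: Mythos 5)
Your proposal is correct and follows essentially the same route as the paper: bound the risk difference via the $L$-Lipschitz property of $V$ by the $\mathcal{L}^1$ distance, pass to the $\mathcal{L}^2$ distance by Jensen/Cauchy--Schwarz, and invoke Lemma \ref{lem:approx} on the same high-probability event. Your version is in fact slightly more careful than the paper's in tracking the sum over labels with $\sum_{y\in\YY}\rho_y(x)=1$ and in making the choice $\delta_1=\delta$ explicit for the failure-probability bookkeeping.
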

\begin{proof}[Proof of Lemma \ref{lem:comp}]$\mathcal{E}_{V}(\hat{f})-\mathcal{E}_{V}(f) \leq \int_{\XX}\left|V(y\hat{f}(x))-V(yf(x))\right|d\rho_{\mathcal{M}}(x)\leq L \int_{\XX}|\hat{f}(x)-f(x)|d\rho_{\mathcal{M}}(x)\leq L \sqrt{\int_{\XX}(\hat{f}(x)-f(x))^2d\rho_{\mathcal{M}}(x)}=L \nor{\hat{f}-f}_{\mathcal{L}^2(\XX,\rho_{\mathcal{M}})},$ where we used the Lipchitz condition and Jensen inequality. The rest of the proof follows from Lemma \ref{lem:approx}. 
\end{proof}


We are now ready to prove Theorem $2$.
\begin{proof} [Proof of Theorem $2$]
Let $\hat{f}_{N}=\argmin_{f\in \hat{\mathcal{F}}}\hat{\EE}_{V}(f)$,  $\hat{f}=\argmin_{f\in \hat{\mathcal{F}}}\mathcal{E}_{V}(f)$,
$f^*=\argmin_{f\in \mathcal{F}}\mathcal{E}_{V}(f)$.
\begin{align*}
\mathcal{E}_{V}(\hat{f}_{N})-\min_{f\in \mathcal{F}}\mathcal{E}_{V}(f)&=\underbrace{\left(\mathcal{E}_{V}(\hat{f}_{N})-\mathcal{E}_{V}(\hat{f})\right)}_{\text{Statistical Error}}+\underbrace{\left(\mathcal{E}_{V}(\hat{f})-\mathcal{E}_{V}(f^*)\right)}_{\text{Approximation Error}}
\end{align*}
\textbf{Bounding the statistical error.}
The first term is the usual estimation or statistical error than we can bound as follows:
\begin{align*}
\mathcal{E}_{V}(\hat{f}_{N})-\mathcal{E}_{V}(\hat{f})&=\left(\mathcal{E}_{V}(\hat{f}_{N})- \hat{\EE}_{V}(\hat{f}_{N})\right)+\underbrace{\left( \hat{\EE}_{V}(\hat{f}_{N})-\hat{\EE}_{V}(\hat{f})\right)}_{\leq 0,\text{by optimality of $\hat{f}_{N}$}}+\left(\hat{\EE}_{V}(\hat{f})-\EE_{V}(\hat{f})\right) \\
&\leq 2\sup_{f\in \hat{\mathcal{F}}} \left|\mathcal{E}_{V}(f)-\hat{\EE}_{V}(f)\right|.
\end{align*}
Assume that the loss $V: \mathbb{R}\to [0,1]$, when the data $(x_i,y_i)$ or the random projections $W^{\ell}$ change $\sup_{f\in \hat{\mathcal{F}}} \left|\mathcal{E}_{V}(f)-\hat{\EE}_{V}(f)\right|$  , can change by no more then $\frac{2}{N}$  then by applying McDiarmids inequality \cite{Radm} we have with a probability at least $1-\delta/2$

$$\sup_{f\in \hat{\mathcal{F}}} \left|\mathcal{E}_{V}(f)-\hat{\EE}_{V}(f)\right| \leq \mathbb{E}_{x,W}\left(\sup_{f\in \hat{\mathcal{F}}} \left|\mathcal{E}_{V}(f)-\hat{\EE}_{V}(f)\right|\right) +\sqrt{\frac{2\log(2/\delta)}{N}} .$$
Now using the classical rademachar complexity type bounds \cite{Radm}, we have:
$$\mathbb{E}_{x,W}\sup_{f\in \hat{\mathcal{F}}} \left|\mathcal{E}_{V}(f)-\hat{\EE}_{V}(f)\right| \leq2 L \mathcal{R}_{N}(\hat{\mathcal{F}})+ \frac{|V(0)|}{\sqrt{N}},$$
where $\mathcal{R}_{N}(\hat{\mathcal{F}})$ is defined as follows:
$$\mathcal{R}_{N}(\hat{\mathcal{F}})=\mathbb{E}_{x,W,\sigma}\left[\sup_{f\in\hat{\mathcal{F}}}\left|\frac{1}{N}\sum_{i=1}^N \sigma_i f(x_i)\right|\right],$$
where $\sigma_i$ are iid  Rademacher variables $\in \{-1,1\}$, such that $\mathbb{P}(\sigma_i=1)=\mathbb{P}(\sigma_i=-1)=\frac{1}{2}$.

It is sufficient to bound the Rademacher complexity of the class $\hat{\mathcal{F}}$, where the expectation is taken over the randomness of the data and the random features:
\begin{align*}
\mathcal{R}_{N}(\hat{\mathcal{F}})&=\mathbb{E}_{x,W,\sigma}\left[\sup_{f\in\hat{\mathcal{F}}}\left|\frac{1}{N}\sum_{i=1}^N \sigma_i f(x_i)\right|\right]=\mathbb{E}_{x,W,\sigma}\left[\sup_{f\in\hat{\mathcal{F}}}\left|\frac{1}{N}\sum_{i=1}^N \sigma_i\left(\sum_{\ell=1}^m \alpha_{\ell} \phi(x_i,W^{\ell})\right) \right|\right]\\
&= \mathbb{E}_{x,W,\sigma}\left[\sup_{f\in\hat{\mathcal{F}}}\left|\frac{1}{N}\sum_{\ell=1}^m \alpha_{\ell} \sum_{i=1}^N \sigma_i \phi\left(x_i,W^{\ell}\right) \right|\right]\\
&\leq \mathbb{E}_{x,W,\sigma} \frac{1}{N}\nor{\alpha}_{\infty}\sum_{\ell=1}^m \left|\sum_{i=1}^N \sigma_i\phi\left(x_i,W^{\ell}\right)\right| \text{ By Holder inequality: $\scalT{a}{b}\leq \nor{a}_{\infty}\nor{b}_{1}$}\\
&\leq \frac{C}{mN} \mathbb{E}_{x,W}\sum_{\ell=1}^m \sqrt{\mathbb{E}_{\sigma}\left(\sum_{i=1}^N \sigma_i\phi\left(x_i,W^{\ell}\right)\right)^2} \text{Jensen inequality, concavity of square root}
\end{align*} 
Note that $\mathbb{E}(\sigma_i\sigma_j)=0$, for $i \neq j$ it follows that:\\
$\mathbb{E}_{\sigma}\left(\sum_{i=1}^N \sigma_i\phi\left(x_i,W^{\ell}\right)\right)^2= \mathbb{E}_{\sigma} \sum_{i=1}^N \sum_{j=1}^N \sigma_i\sigma_j \phi\left(x_i,W^{\ell}\right)\phi\left(x_j,W^{\ell}\right)= \sum_{i=1}^N \phi^2\left(x_i,W^{\ell}\right)$.
Finally: 
\begin{align*}
\mathcal{R}_{N}(\hat{\mathcal{F}})&\leq \frac{C}{{mN}} \sum_{\ell=1}^m \mathbb{E}_{x,W}\left(\sqrt{ \sum_{i=1}^N \phi^2\left(x_i,W^{\ell}\right)}\right)\\
&= \frac{C}{N}\mathbb{E}_{x,W}\left(\sqrt{ \sum_{i=1}^N \phi^2\left(x_i,W\right)}\right)\\
&\leq \frac{C}{N}\sqrt{\mathbb{E}_{x,W}\left(\sum_{i=1}^N \phi^2(x_i,W)\right)} \text{By Jensen inequality }\\
&=  \frac{C}{N}\sqrt{N \mathbb{E}_{x,W}\phi^2(x,W)}\\
&\leq  \frac{C}{\sqrt{N}} \sqrt{\mathbb{E}_{x} \left(K(x,x)\right)}.
\end{align*}
Recall that for $x\in \mathbb{S}^{d-1}$, $K(x,x)=\sigma^2(q)\nor{x}^2 \kappa_{q}(x,x)=\sigma^2(q)$.
Hence:
$$\mathcal{R}_{m}(\hat{\mathcal{F}})\leq C \sqrt{\frac{\sigma^2(q)}{N}},$$
hence we have with probability $1-\delta/2$, on the choice of random data and random projections:
\begin{equation}
\mathcal{E}_{V}(\hat{f}_{N})-\mathcal{E}_{V}(\hat{f})\leq 4LC \sqrt{\frac{\sigma^2(q)}{N}}+ \frac{2|V(0)|}{\sqrt{N}}+ 2\sqrt{\frac{2\log(2/\delta)}{N}}.
\end{equation}
\textbf{Bounding the Approximation Error.} 
Let $\hat{f}^*$, the function defined in Lemma \ref{lem:approx}, that approximates $f^*$ in $\hat{\mathcal{F}}$.  
By Lemma \ref{lem:comp} we know that:
$$\mathcal{E}_{V}(\hat{f}^*)\leq \mathcal{E}_{V}(f^*)+LCC_1  \sqrt{\frac{d_{\mathcal{M}}\log \left(\frac{{\rm{diam}}(\mathcal{M})\sqrt{d}}{\delta}\right)+\log(q+1)}{{m}}}\left(1+\sqrt{2\log\left(\frac{1}{\delta}\right)}\right) $$
with probability $1-2\delta-2 e^{-cd/4}$, on the choice of the random projections.
By optimality of $\hat{f} \in \tilde{\mathcal{F}}$, we have  with at least the same probability $1-2\delta-2 e^{-cd/4}$ $$\EE_{V}(\hat{f})\leq \EE_{V}(\hat{f}^*)\leq \mathcal{E}_{V}(f^*) +LCC_1  \sqrt{\frac{d_{\mathcal{M}}\log \left(\frac{{\rm{diam}}(\mathcal{M})\sqrt{d}}{\delta}\right)+\log(q+1)}{{m}}}\left(1+\sqrt{2\log\left(\frac{1}{\delta}\right)}\right)$$
Hence by a union bound with probability $1-3\delta -2 e^{-cd/4}$, on the training set and the random projections:
\begin{align*}
\mathcal{E}_{V}(\hat{f}_{N})- \min_{f\in \mathcal{F}}\mathcal{E}_{V}(\hat{f}) &\leq  4LC \sqrt{\frac{\sigma^2(q)}{N}}+ \frac{2|V(0)|}{\sqrt{N}}+ 2\sqrt{\frac{2\log(1/\delta)}{N}}\\
&+ LCC_1  \sqrt{\frac{d_{\mathcal{M}}\log \left(\frac{{\rm{diam}}(\mathcal{M})\sqrt{d}}{\delta}\right)+\log(q+1)}{{m}}}\left(1+\sqrt{2\log\left(\frac{1}{\delta}\right)}\right).
\end{align*}
\end{proof}

\bibliographystyle{ieeetr}


\begin{thebibliography}{10}
\bibitem{vapnik98}
V.~N. Vapnik, {\em Statistical learning theory}.
\newblock A Wiley-Interscience Publication,1998.

\bibitem{Scholkopf99kernelprincipal}
B.~Scholkopf, A.~Smola, and K.-R. MŸller, ``Kernel principal component
  analysis,'' in {\em advances in Kernel learning},
  pp.~327--352, MIT Press, 1999.

\bibitem{lecun}
Y.~LeCun, L.~Bottou, Y.~Bengio, and P.~Haffner, ``Gradient-based learning
  applied to document recognition,'' in, vol.~86,
  pp.~2278--2324, 1998.

\bibitem{Recht07}
A.~Rahimi and B.~Recht, ``Random features for large-scale kernel machines.,''
  {\em NIPS}, 2007.

\bibitem{timitRF}
P.~Huang, H.~Avron, T.~Sainath, V.~Sindhwani, and B.~Ramabhadran, ``Kernel
  methods match deep neural networks on timit.,'' {\em In ICASSP }, 2014.

\bibitem{wahba90}
G.~Wahba, {\em Spline models for observational data}, vol.~59 of {\em CBMS-NSF
 }.
\newblock Philadelphia, PA: SIAM, 1990.

\bibitem{RCA}
D.~Lopez-Paz, S.~Sra, A.~J. Smola, Z.~Ghahramani, and B.~Scholkopf,
  ``Randomized nonlinear component analysis,'' {\em ICML}, 2014.

\bibitem{Maxout}
I.~Goodfellow, D.~Warde-Farley, M.~Mirza, A.~C. Courville, and Y.~Bengio,
  ``Maxout networks,'' {\em ICML}, 2013.

\bibitem{Frieze95}
A.~Frieze and M.~Jerrum, ``Improved approximation algorithms for max k-cut and
  max bisection,'' 1995.

\bibitem{galambos}
R.~Galambos, ``The asymptotic theory of extreme order statistics,'' {\em John
  Wiley and sons}, 1940.

\bibitem{Indyk01}
P.~Indyk, ``Algorithmic applications of low-distortion geometric embeddings.,''
  in {\em FOCS}, pp.~10--33, IEEE Computer Society, 2001.

\bibitem{assouad}
J.~Heinonen, {\em Lectures on Analysis on Metric Spaces.}
\newblock Springer, Springer, 2001.

\bibitem{Rah_Rec:2008:allerton}
A.~Rahimi and B.~Recht, ``Uniform approximation of functions with random
  bases,'' in {\em Proceedings of the 46th Annual Allerton Conference}, 2008.

\bibitem{Bach15}
F.~R. Bach, ``On the equivalence between quadrature rules and random
  features,'' {\em CoRR}, vol.~abs/1502.06800, 2015.

\bibitem{durrant2013sharp}
R.~J. Durrant and A.~Kaban, ``Sharp generalization error bounds for
  randomly-projected classifiers,'' in {\em ICML.}, pp.~693--701,
  JMLR W\&CP volume 28 (3)., 2013.

\bibitem{JL}
W.~B. Johnson and J.~Lindenstrauss, ``Extensions of lipschitz mappings into a
  hilbert space.,'' {\em Conference in modern analysis and probability},
  vol.~Contemp. Math., 26, Amer. Math. Soc., Providence, RI, p.~189Ð206, 1984.

\bibitem{conf/soda/BartalRS11}
Y.~Bartal, B.~Recht, and L.~J. Schulman, ``Dimensionality reduction: Beyond the
  johnson-lindenstrauss bound.,'' in {\em SODA}, 2011.

\bibitem{DaiXHLRBS14}
B.~Dai, B.~Xie, N.~He, Y.~Liang, A.~Raj, M.~Balcan, and L.~Song, ``Scalable
  kernel methods via doubly stochastic gradients,'' in {\em NIPS},
  pp.~3041--3049, 2014.

\bibitem{SmolaS00}
A.~J. Smola and B.~Schšlkopf, ``Sparse greedy matrix approximation for machine
  learning.,'' in {\em ICML}, 2000.

\bibitem{Williams01usingthe}
C.~Williams and M.~Seeger, ``Using the nystršm method to speed up kernel
  machines,'' in {\em NIPS  13},
  pp.~682--688, MIT Press, 2001.

\bibitem{Fine01efficientsvm}
S.~Fine, K.~Scheinberg, N.~Cristianini, J.~Shawe-taylor, and B.~Williamson,
  ``Efficient svm training using low-rank kernel representations,'' {\em JMLR}, vol.~2, pp.~243--264, 2001.

\bibitem{Roweis2000}
S.~T. Roweis and L.~K. Saul, ``Nonlinear dimensionality reduction by locally
  linear embedding,'' {\em Science}, 2000.

\bibitem{Boyd}
A.~Magnani and S.~P. Boyd, ``Convex piecewise-linear fitting,'' 2006.

\bibitem{MohamedSDRHP11}
A.~Rahman~Mohamed, T.~N. Sainath, G.~E. Dahl, B.~Ramabhadran, G.~E. Hinton, and
  M.~A. Picheny, ``Deep belief networks using discriminative features for phone
  recognition.,'' in {\em ICASSP}, 2011.

\bibitem{DahlSH13}
G.~E. Dahl, T.~N. Sainath, and G.~E. Hinton  ``Improving deep neural networks for LVCSR using rectified linear units and dropout ,''{\em ICASSP},.

\bibitem{Frieze95}
A.~Frieze and M.~Jerrum, ``Improved approximation algorithms for max k-cut and
  max bisection,'' 1995.

\bibitem{Assouad}
J.~Heinonen, {\em Lectures on Analysis on Metric Spaces.}
\newblock Springer, Springer, 2001.

\bibitem{vershyninreview}
R.~Vershynin, ``Introduction to the non-asymptotic analysis of random
  matrices,'' {\em Compressed Sensing: Theory and Applications, Y. Eldar and G.
  Kutyniok, Eds. Cambridge University Press.}, 2011.

\bibitem{Radm}
P.~L. Bartlett and S.~Mendelson, ``Rademacher and gaussian complexities: Risk
  bounds and structural results,'' {\em J. Mach. Learn. Res.}, vol.~3,
  pp.~463--482, Mar. 2003.
  
\bibitem{gurls}
A.~Tacchetti, P.~K. Mallapragada, M.~Santoro, and L.~Rosasco, ``Gurls: a least
  squares library for supervised learning,'' {\em CoRR}, vol.~abs/1303.0934,
  2013.
\end{thebibliography}

\end{document}